\newtheoremstyle{named}{}{}{\itshape}{}{\bfseries}{.}{.5em}{\thmnote{#3}}
\theoremstyle{named}
\newtheorem*{namedlemma}{Lemma}
\title{Membership Testing in Markov Equivalence Classes\\via Independence Query Oracles}
\author[1,2,*]{Jiaqi Zhang}
\author[2,*]{Kirankumar Shiragur}
\author[1,2]{Caroline Uhler}
\affil[1]{LIDS, Massachusetts Institute of Technology}
\affil[2]{Eric and Wendy Schmidt Center, Broad Institute of MIT and Harvard}
\affil[*]{Equal contributions. Reversed alphabetical order.}
\date{}
\begin{document}

\maketitle
\addtocontents{toc}{\protect\setcounter{tocdepth}{0}}
\begin{abstract}

Understanding causal relationships between variables is a fundamental problem with broad impacts in numerous scientific fields.
While extensive research have been dedicated to \emph{learning} causal graphs from data, its complementary concept of \emph{testing} causal relationships has remained largely unexplored.
In our work, we take the initiative to formally delve into the \emph{testing} aspect of causal discovery. 
While \emph{learning} involves the task of recovering the Markov equivalence class (MEC) of the underlying causal graph from observational data, our \emph{testing} counterpart addresses a critical question: \emph{Given a specific MEC, can we determine if the underlying causal graph belongs to it with observational data?}

We explore constraint-based testing methods by establishing bounds on the required number of conditional independence tests. Our bounds are in terms of the maximum degree ($d$), the size of the largest clique ($s$), and the size of the largest undirected clique ($s'$) of the given MEC.
In the worst case, we show a lower bound of $\exp(\Omega(s'))$ independence tests. 
We then give an algorithm that resolves the task with $\exp(O(s))$ independence tests. Notably, our lower and upper bounds coincide when considering moral MECs ($s=s'$).
Compared to \emph{learning}, where $\exp(\Theta(d))$ independence tests are deemed necessary and sufficient, our results show that \emph{testing} is a relatively more manageable task and requires exponentially less independence tests in graphs featuring high maximum degrees and small clique sizes.

\end{abstract}

\section{INTRODUCTION}




The study of causal relationships, often represented using directed acylic graphs, has found practical applications in multiple fields including biology, epidemiology, economics, and social sciences \citep{king2004functional,cho2016reconstructing,tian2016bayesian, sverchkov2017review,rotmensch2017learning,pingault2018using, de2019combining, reichenbach1956direction,woodward2005making,eberhardt2007interventions, hoover1990logic, friedman2000using,robins2000marginal, spirtes2000causation, pearl2003causality}. Due to its importance and widespread utility, the challenge of reasoning about causal connections using data has been the subject of substantial research. With observational (i.e., non-experimental) data, it is well-known that the underlying causal graph is generally only identifiable up to its Markov equivalence class
(MEC) \citep{verma1990equivalence}. Prior research have investigated the complexities of learning MECs from observational data (c.f.,~\citet{spirtes2000causation,chickering2002optimal,colombo2011learning,solus2021consistency}). While significant attention has been devoted to the exploration of learning, the complementary theme of testing specific aspects of the hidden causal graph remains under-explored.

Learning and testing problems are commonly studied in various fields, including information theory \citep{fisher1943relation, orlitsky2003always} and learning theory \citep{good1953population, goldreich1998property, rubinfeld1996robust, mcallester2000convergence}. The concept of testing becomes particularly relevant in scenarios with limited data, where traditional learning methods are no longer viable. As an example, a groundbreaking discovery by \citet{paninski2008coincidence} established that testing if a hidden distribution supported on $k$ elements is close to a uniform distribution can be accomplished using just $O(\sqrt{k})$ (sublinear) samples. In contrast, learning the distance to the uniform distribution necessitates $\Omega(k)$ (linear) samples, making testing a considerably easier problem. This revelation has prompted researchers \citep{indyk2012approximating, batu2000testing, chan2014optimal, valiant2017automatic, batu2001testing} to investigate whether testing is generally easier than learning, a trend that has significant impact when data is scarce. 

In light of these findings, our work focuses on understanding the complexity of learning and testing problems in the context of causal discovery. As learning is concerned with the recovery of MECs from observational data, we consider the natural testing counterpart: 
\begin{center}
    \emph{Civen a specific MEC and
observational data from a causal graph,
can we determine if the data-generating causal
graph belongs to the given MEC?}
\end{center}
This inquiry opens up a novel and important avenue in the field of causal inference, focusing on the validation and assessment of pre-defined causal relationships within a given equivalence class. For example, such an equivalence class could be provided by a domain expert \citep{choo2023active, scheines1998tetrad,de2011efficient,flores2011incorporating,li2018bayesian} or a hypothesis generated by AI \citep{long2023causal,vashishtha2023causal}; and the testing problem aims to confirm the expert's guidance with minimal effort and data. In this context, we explore constraint-based methods and investigate the complexity of conditional independence tests, assuming standard Markov and faithfulness assumptions \citep{lauritzen1996graphical,spirtes2000causation}. 


We demonstrate that, in the worst-case scenario, any constraint-based method still requires a minimum of $\exp(\Omega(s))$ number of conditional independence tests to solve the testing problem, where $s$ signifies the size of the maximum undirected clique in the given MEC. Complementing this result, we also introduce an algorithm that resolves the testing problem using at most $\exp(O(s) + O(\log n))$ tests. Our lower and upper bounds coincide\footnote{Ignoring the polynomial terms in $n$.} asymptotically in the exponents.


Comparing our \emph{testing} results to the \emph{learning} problem, we remark here that most well-known constraint-based learning algorithms, including PC \citep{spirtes2000causation} and others \citep{verma1990equivalence, spirtes1989causality, spirtes2000causation}, in the worst-case, require an exponential number of conditional independence tests based on the maximum in-degree of the graph. As the maximum undirected clique size is no more than the maximum in-degree\footnote{Consider the most downstream node in the clique.}, it is evident that testing, although entailing an exponential number of tests, is still an easier task than its learning counterpart. Additionally, testing becomes significantly easier than learning on graphs featuring high in-degrees and small clique sizes.

\paragraph{Organization} 
In Section~\ref{sec:prelim}, we provide formal definitions of relevant concepts. In Section~\ref{sec:result}, we state our main results and provide an overview of the techniques used to derive them. We then unravel the lower bound result in Section~\ref{sec:lowerbound}. Our algorithm for testing and its analysis are presented in Section~\ref{sec:upperbound}. In Section~\ref{sec:polytope}, we provide a geometric interpretation of our results using the DAG associahedron. Finally, we conclude and discuss future works in Section~\ref{sec:discuss}.

\subsection{Related Works}

Learning causal relationships from observational data is a well-established field, with methods broadly falling into three main categories: constraint-based methods \citep{verma1990equivalence, spirtes1989causality, spirtes2000causation, kalisch2007estimating}, score-based methods \citep{chickering2002optimal, geiger2002parameter,brenner2013sparsityboost,solus2021consistency}, and other hybrid approaches \citep{schulte2010imap,alonso2013scaling, nandy2018high}. Score-based methods evaluate causal graphs (or MECs) by assigning scores that reflect their compatibility with the data. They then solve a combinatorial optimization problem to identify the graph with the best score. In contrast, constraint-based methods infer the causal structure by examining independence constraints imposed by the underlying causal graph on the data distribution. As one of the leading constraint-based algorithms, PC \citep{spirtes2000causation} starts with a complete undirected graph and systematically eliminates edges by performing conditional independence tests with increasing cardinality. The number of tests required for the PC algorithm to recover the true causal graph is roughly $\frac{n^2 (n-1)^{(d-1)}}{(d-1)!}$, where $n$ is the number of vertices and $d$ is the maximum in-degree.

The PC algorithm assumes causal sufficiency, i.e., no latent confounders. Assuming this, another prominent example is the IC algorithm \citep{verma1990equivalence}, whose complexity is bounded exponentially by the maximum clique size of the underlying Markov network. Note that as all parents of a node in the DAG are connected in its Markov network, this complexity is at least the exponent in maximum in-degree. To handle violations of causal sufficiency, \citet{spirtes2013causal} introduced FCI that invokes additional steps after PC. Subsequent work by \citet{claassen2013learning} presents FCI+, a modified version of FCI, that resolves the task in worst case $n^{O(d)}$ tests. In general, the learning problem is NP-hard \citep{chickering2004large}. Compared to these learning results, our algorithm establishes that testing can be solved in $n^{O(s)}$ tests, where $s$ is the maximum undirected clique size. As we will show in the next section, it always holds that $s\leq d$ in any DAG.
 
We note that, in other contexts,  learning and testing are well-studied problems. In recent years, there has been significant attention to understanding the time and sample complexities for both learning (or estimating) \citep{VV11a, OSW16, WY15, JVHW15, BZLV16} and testing \citep{valiant2017automatic, batu2001testing, batu2000testing, batu2004sublinear, indyk2012approximating} various properties of distributions. For a more in-depth overview of these topics, we refer interested readers to the comprehensive survey by \citet{canonne2020survey} and the references therein.



\section{PRELIMINARIES}\label{sec:prelim}

\subsection{Graph Definitions}

Let $\cG = ([n],E)$ be a simple graph with nodes $[n]=\{1,\dots,n\}$ and edges $E$. 
A \emph{clique} is a graph where each pair of nodes are adjacent.
The \emph{degree} of a node in the graph refers to the number of adjacent nodes in the graph.
For any two nodes $i,j \in [n]$, we write $i \sim j$ if they are adjacent and $i \not\sim j$ otherwise.
The set $E$ may contain both directed and undirected edges.
To specify directed and undirected edges, we use $i \to j$ (or $j\from i$) and $i-j$\footnote{In the context of denoting paths, we sometimes use $i-j$ to represent ambiguous directions as well.} respectively.
Consider a node $i \in [n]$ in a fully directed graph $\cG$, let $\pa_\cG(i), \ch_\cG(i)$ and $\de_\cG(i)$ denote the parents, children and descendants of $i$ respectively.
Let $\cde_\cG(i) = \de_\cG(i) \cup \{i\}$.
The \emph{maximum in-degree} $d$ of $\cG$ is the size of the largest $\pa_\cG(i)$.
The \emph{skeleton} $\skel(\cG)$ of a graph $\cG$ refers to the graph where all edges are made undirected.
A \emph{v-structure} refers to three distinct nodes $i,j,k$ such that $i \to k \gets j$ and $i \not\sim j$.

A \emph{cycle} consists of $l \geq 3$ nodes where $i_1 \sim i_2 \sim\dots i_l \sim i_1$. It is directed if at least one of the edges is directed and all directed edges are in the same direction along the cycle.
A partially directed graph is a \emph{chain graph} if it has no directed cycle.
In the undirected graph obtained by removing all directed edges from a chain graph $\cG$, each connected component is called a \emph{chain component} which is a subgraph of $\cG$.
For a partially directed graph, an \emph{acyclic completion} refers to an assignment of directions to undirected edges such that the resulting fully directed graph has no directed cycles.

\subsection{D-Separation and Conditional Independence}

Directed acyclic graphs (DAGs) are fully directed chain graphs that are commonly used in causality, where nodes represent random variables \citep{pearl2009causal}. Formally, consider a \emph{structural causal model} corresponding to a DAG $\cH=([n], E)$ and a set of random variables $X=\{X_1,\dots,X_n\}$ whose joint distribution $\bbP$ factorizes according to $\cH$, i.e., $\bbP(X)=\prod_{i\in[n]}\bbP(X_i\mid X_{\pa_\cH(i)})$ \citep{lauritzen1996graphical}. 

The factorization entails a set of conditional independencies (CIs) of the observational distribution. These entailed CI relations are fully described by a graphical criterion, known as d-separation \citep{geiger1990logic}. For disjoint sets $A,B,C\subset [n]$, sets $A$ and $B$ are \emph{d-separated} by $C$ in $\cH$ if and only if any path connecting $A$ and $B$ are inactive given $C$. A path is \emph{inactive} given $C$ when it has a non-\emph{collider}\footnote{Node $d$ is a collider on a path \emph{iff} $\cdot \to d\from\cdot$ on the path.} $c\in C$ or a collider $d$ with $\cde_\cH(d)\cap C=\varnothing$; otherwise the path is \emph{active} given $C$. We denote $A\ci B\mid_\cH C$ if $C$ d-separates $A,B$ in $\cH$ and $A\nci B\mid_\cH C$ otherwise. Fig.~\ref{fig:1} illustrates these concepts. 
\begin{figure}[h]
\centering
\includegraphics[width=0.42\textwidth]{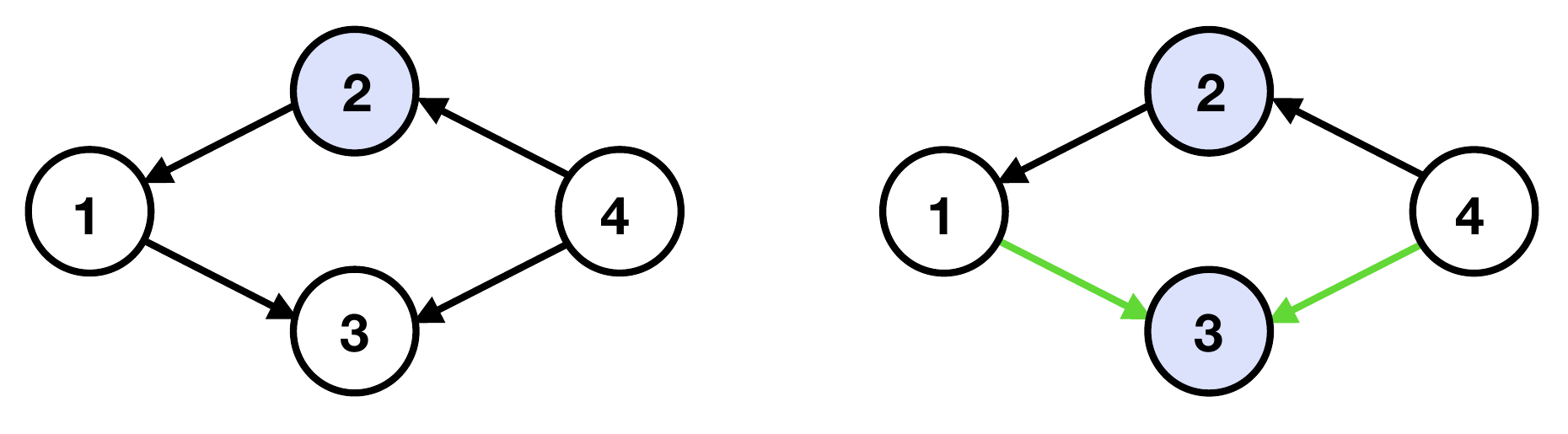}
\caption{\textbf{(Left).} $\{1\}$ and $\{4\}$ are \emph{d-separated} by $\{2\}$, as both paths are \emph{inactive} given $\{2\}$. \textbf{(Right).} $\{1\}$ and $\{4\}$ are \emph{not} d-separated by $\{2,3\}$, as the path $1\to 3\from 4$ is \emph{active} given $\{2,3\}$ by the \emph{collider} $3$.}\label{fig:1}
\vspace{-.1in}
\end{figure}

Random variables $X_{A}, X_{B}$ are conditionally independent given $X_C$ if $A\ci B\mid_\cH C$ \citep{dawid1979conditional}. We write $I_\cH(A,B\mid C)=1$ if $X_{A}, X_{B}$ are conditionally independent given $X_C$ and $I_\cH(A,B\mid C)=0$ otherwise.
Under the so-called faithfulness assumption, the reverse is also true, i.e., all CI relations of $\bbP$ are implied by d-separation in $\cH$. We thus have
\[
A\ci B\mid_\cH  C \Longleftrightarrow I_\cH(A,B\mid C)=1.
\]
If any set among $A,B,C$ only contains one node, e.g., $A=\{a\}$, we write $a\ci B\mid_\cH C$ and $I_\cH(a,B\mid C)$ for simplicity.

For two DAGs $\cH$ and $\cG$, if all d-separations in $\cG$ are in $\cH$, i.e., $A\ci B\mid_\cG C\Rightarrow A\ci B\mid_\cH C$, then $\cG$ is called an \emph{independence map} of $\cH$ and write $\cH\leq\cG$. It is \emph{minimal} if removing any edge from $\cG$ breaks this relation.

\paragraph{Independence Query Oracles} In our work, we assume throughout that the causal DAG $\cH$ is \emph{unknown}. But we assume faithfulness and access to enough observational samples to determine if $X_{A}, X_{B}$ are conditionally independent given $X_C$, i.e., $I_\cH(A,B\mid C)$, when queried. We call this the \emph{independence query oracle}. 

By the above discussion, we know that the value of $I_\cH(A,B\mid C)$ equivalently implies properties of the DAG $\cH$, i.e., whether $A\ci B\mid_\cH C$. Therefore in the following, we also use $I_\cH(A,B\mid C)$ to denote that $A$ and $B$ are d-separated by $C$ in $\cH$ (with a slight misuse of notation).


\subsection{Markov Equivalence Classes}\label{sec:mec}

With observational data and no additional parametric assumptions, the DAG $\cH$ is generally only identifiable up to its Markov equivalence class (MEC) \citep{verma1990equivalence}. Two DAGs $\cH, \cG$ are in the same MEC if any positive distribution that factorizes according to $\cH$ also factorizes according to $\cG$. For any DAG $\cG$, we denote its MEC by $[\cG]$. It is known that $\cH\in[\cG]$ if and only if $\cH,\cG$ share the same skeleton and v-structures \citep{andersson1997characterization}. 
The \emph{essential graph} $\cE(\cG)$ is a partially directed graph that fully characterizes $[\cG]$, where an edge $i \to j$ is directed if $i \to j$ in \emph{every} DAG in $[\cG]$, and an edge $u \sim v$ is undirected if there exists two DAGs $\cG_1, \cG_2 \in [\cG]$ such that $i \to j$ in $\cG_1$ and $i \from j$ in $\cG_2$. We define $\pa_{[\cG]}(i)$ and $\ch_{[\cG]}(i)$ as directed parents and children of $i$ in $\cE(\cG)$, respectively. We denote $\adj_{[\cG]}(i)=\{j: j-i\in \cE(\cG)\}$ as the remaining nodes with undirected edges to $i$ in $\cE(\cG)$.

As illustrated in Fig.~\ref{fig:2}, our results are in terms of graph parameters of an MEC $[\cG]$, which are defined based on its essential graph $\cE(\cG)$. An undirected clique is a clique in $\cE(\cG)$ after removing all its directed edges, where $s$ is the size of the \emph{maximum undirected clique}.
\begin{figure}[h]
\centering
\includegraphics[width=0.36\textwidth]{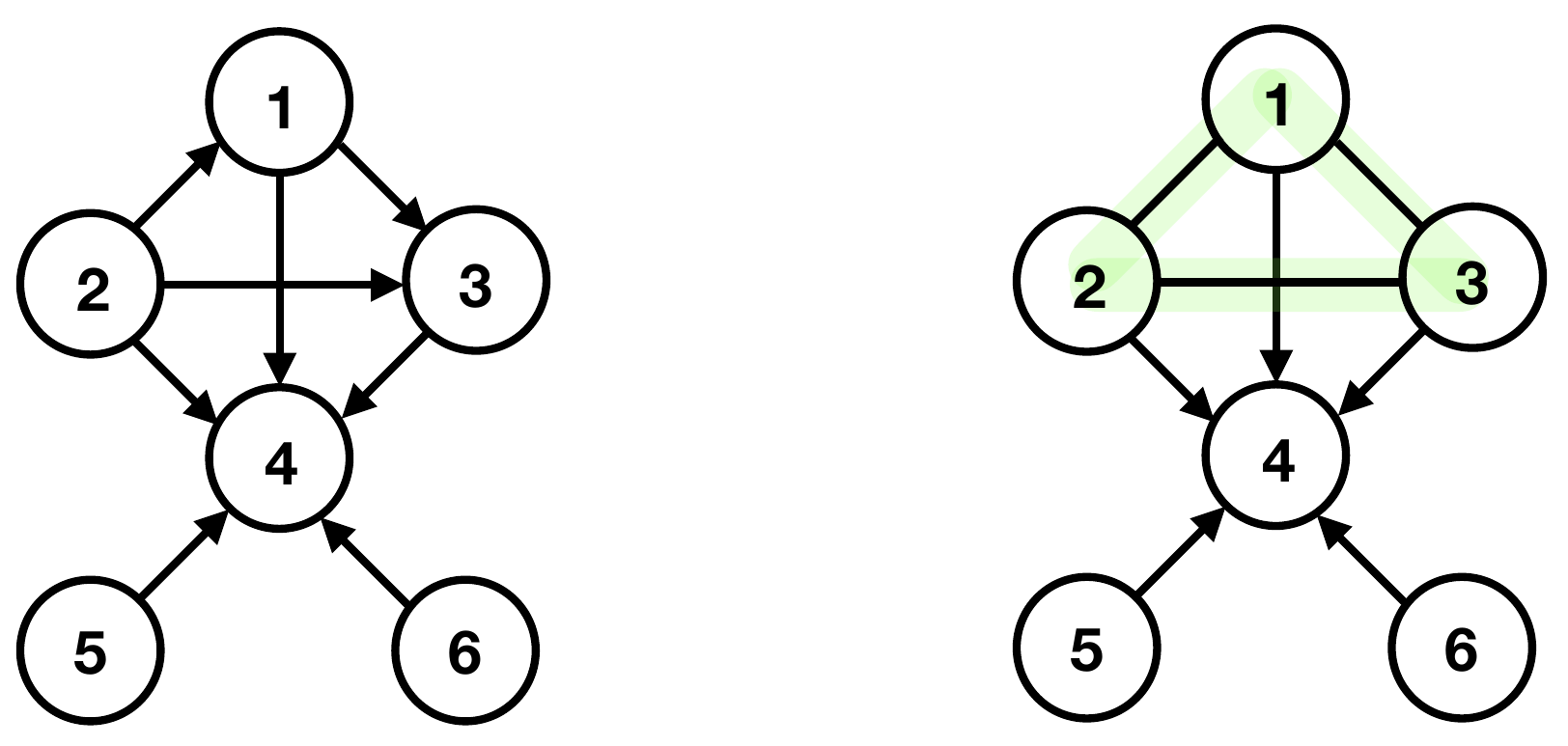}
\caption{\textbf{(Left).} DAG $\cG$. \textbf{(Right).} Essential graph $\cE(\cG)$ representing $[\cG]$. In $\cE(\cG)$, the maximum undirected clique has size $s=3$ (highlighted in green). The maximum in-degree of $\cG$ is $d=5$ (on node $4$).}\label{fig:2}
\end{figure}

We now state some useful properties about essential graphs from \citet{andersson1997characterization} and \citet{wienobst2021polynomial}.
First, every essential graph is a chain graph with chordal
chain components. Therefore any undirected clique of the essential graph must belong to a unique chordal chain components.
Second, orientations in one chain component do not affect orientations in other components.
Third, any clique within a chain component can be made most upstream of this chain component (i.e., all edges in this chain component are pointing out from this clique) and the edge directions of this clique can be made arbitrary as long as there is no cycle within the clique.
These results imply, if we denote the maximum undirected clique of $\cE(\cG)$ by $\cS$, that for any acylic completion of $\cS$, there is $\cG_1\in[\cG]$ that contains it; moreover, $\cS$ is most upstream of $\cG_1$.


\section{MAIN RESULTS}\label{sec:result}
As highlighted in the introduction, our work formally explores the testing aspects of causal discovery. We now provide a precise definition of the testing problem.

\paragraph{Testing Problem}
Given a specific MEC $[\cG]$ as well as independence-query-oracle access to an observational distribution $\bbP$ respecting a hidden causal DAG $\cH$, design an algorithm to test if $\cH \in [\cG]$ while minimizing the number of CI tests queried.

Throughout our work, we focus on the worst-case query complexity for causal DAGs. Our query complexity bounds are articulated in forms of the parameters of the essential graph $\cE(\cG)$, which succinctly characterizes the specified MEC $[\cG]$. In the following, we present our two key findings, which establish matching lower and upper bounds on the query complexity of the testing problem. We start with our lower bound result.

\begin{theorem}\label{thm:lowerbound}
Given a specific MEC $[\cG]$, there exists a hidden DAG $\cH$ such that any algorithm requires at least $\exp(\Omega(s))$ CI tests to test if $\cH\in[\cG]$. Here, $s$ is the size of the maximum undirected clique in $\cE(\cG)$.
\end{theorem}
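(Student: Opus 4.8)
The plan is to construct a specific MEC $[\cG]$ together with a family of candidate hidden DAGs that all share the same skeleton and v-structures outside a carefully chosen ``core'' region, but which differ from the target MEC in a way that can only be detected by CI tests conditioning on large subsets of the maximum undirected clique. The natural place to hide information is the maximum undirected clique $\cS$ of size $s$ in $\cE(\cG)$. By the properties cited from \citet{andersson1997characterization} and \citet{wienobst2021polynomial}, $\cS$ can be made most upstream of some $\cG_1\in[\cG]$, and every acyclic completion of $\cS$ yields a valid member of $[\cG]$. So within the clique the edge orientations are completely free; the discriminating signal must come from how $\cS$ attaches to the rest of the graph.

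**First I would** design $[\cG]$ so that the hidden DAG $\cH$ either lies in $[\cG]$ or differs by a single additional edge (or a single flipped/added v-structure) involving the clique $\cS$ and one extra node, say $w$, that is adjacent to all of $\cS$. The idea is that $\cH\notin[\cG]$ precisely when some specific conditional independence $a \ci b \mid C$ holds (or fails) only for one particular conditioning set $C\subseteq \cS$, and the adversary can place this discrepancy at any of the exponentially many subsets of $\cS\setminus\{a,b\}$. Concretely, I would set up a ``needle in a haystack'': an adversary picks a subset $T \subseteq \cS$ and plants a deviation detectable only by the test $I_\cH(a,b\mid T)$ for that exact $T$, where $a,b$ are two fixed clique nodes. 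Since there are $2^{\Theta(s)}$ choices of $T$, and because a clique gives no side information (all pairs are adjacent, so no smaller-conditioning-set test can rule out candidates by monotonicity or by separating active paths), any algorithm must in the worst case query $\exp(\Omega(s))$ of these sets before it can distinguish the in-class from the out-of-class instance.

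**The core of the argument** is then an adversary/indistinguishability lower bound: I would show that over the query-answer transcript, any deterministic algorithm making fewer than $2^{s-O(1)}$ queries receives answers consistent with \emph{both} a DAG in $[\cG]$ and a DAG outside $[\cG]$, so it cannot decide correctly. This requires verifying that each CI query $I_\cH(A,B\mid C)$ that does \emph{not} coincide with the planted test returns the same value under the in-class and out-of-class hidden DAGs. That consistency check is really a d-separation calculation: I must confirm that flipping a single orientation or adding a single edge inside/around the fully-connected clique $\cS$ changes the d-separation status of at most the one targeted triple $(a,b,T)$, leaving every other query unaffected.

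**The hard part will be** exactly this last verification --- engineering the gadget so that the planted discrepancy is genuinely localized to a single conditioning set and is invisible to all other queries. Because $\cS$ is a clique, $a$ and $b$ are adjacent, so one cannot simply make $a\ci b$ depend on $\cS$ directly (adjacent nodes are never d-separated). The fix I anticipate is to route the signal through the auxiliary node $w$: create a potential v-structure $a \to w \from b$ whose collider $w$ is (de)activated only when the conditioning set contains or excludes a precise combination of clique nodes, so that the relevant independence toggles only for one target set $T$. Pinning down such a gadget, and proving its d-separation behavior is perfectly localized across all $2^{\Theta(s)}$ possible queries while keeping the skeleton and all other v-structures fixed (so that all candidates remain in one well-defined MEC except for the planted change), is the delicate combinatorial step on which the whole bound rests.
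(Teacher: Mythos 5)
Your high-level strategy --- an adversary hides a subset $T$ of the maximum undirected clique $\cS$ so that only the single query conditioning on exactly $T$ reveals the discrepancy, forcing $2^{\Theta(s)}$ queries by indistinguishability --- is exactly the shape of the paper's argument. But the two steps you flag as open are precisely the content of the proof, and your proposed way of filling them goes in the wrong direction. The paper's construction needs no auxiliary node $w$ and no collider gadget: it takes the given $\cG$, picks $i$ and $j$ to be the first and roughly $(s/2)$-th clique nodes in $\cG$'s topological order, and lets $\cH$ be $\cG$ with the undirected edge $i\to j$ \emph{deleted}. Your worry that ``adjacent nodes are never d-separated'' dissolves here, because $i$ and $j$ are not adjacent in the hidden $\cH$; the separating set is forced to intersect $\cS$ in exactly $K=\pa_{\cG}(j)\cap\ch_{\cG}(i)\cap\cS$, the clique nodes lying between $i$ and $j$ in the order. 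Since every topological order of an undirected clique yields a valid member of $[\cG]$, the adversary can realize any of $\binom{s}{\lceil s/2\rceil-1}$ sets as $K$, which gives the bound. Your gadget with a node $w$ adjacent to all of $\cS$ is not only unnecessary but problematic: it modifies the instance (potentially enlarging the maximum undirected clique to $s+1$ or introducing new v-structures that change $\cE(\cG)$), and you have not exhibited a d-separation pattern that toggles on exactly one conditioning set.

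Two further gaps. First, the theorem is instance-dependent: the MEC $[\cG]$ is \emph{given}, and the hard $\cH$ must be constructed for that arbitrary MEC (the paper stresses this explicitly). You instead propose to ``construct a specific MEC,'' which would only prove that \emph{some} MEC is hard --- a strictly weaker statement. The paper avoids this by using the structural facts you correctly cite (any undirected clique can be made most upstream of its chain component, with arbitrary acyclic orientation) to embed the deletion argument inside whatever $\cE(\cG)$ is handed to you. Second, the ``consistency check'' you defer --- that every query with $C\cap\cS\neq K$ returns the same answer under $\cH$ and under $\cG$ --- is the technical heart of the proof (the paper's Lemmas~\ref{lm:2dag1var} and~\ref{lm:2dag1ineq}). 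It requires a careful path-surgery argument showing that deleting one undirected edge can only destroy active paths whose conditioning set is sandwiched between $\pa_{\cG}(j)\cap\ch_{\cG}(i)\cap\cS$ and $(\pa_{\cG}(j)\setminus\{i\})\cap\cS$; this is a several-page induction on path length and is not a routine calculation. As written, your proposal identifies where the difficulty lies but does not resolve it, and the specific gadget you anticipate would not lead to the claimed instance-dependent bound.
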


It is worth noting that for the learning task, which entails the recovery of $[\cH]$, algorithms typically exhibit a query complexity that is at least an exponential function of the maximum in-degree of the essential graph. Since the size of the maximum undirected clique is always upper bounded by the maximum degree, our lower bound result suggests that the testing problem might be easier compared to learning. We confirm this by the following result, which provides an algorithm that resolves the testing problem with a query count matching the lower bound.

\begin{theorem}\label{thm:upperbound}
Given a specific MEC $[\cG]$ and any hidden DAG $\cH$, there exists an algorithm that runs in polynomial \footnote{Here the run time is for choosing the next CI test to perform and is polynomial in the number of nodes.} time and performs at most $\exp(O(s)+O(\log n))$ number of CI tests to test if $\cH\in[\cG]$. 
\end{theorem}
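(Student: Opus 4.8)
The plan is to reduce the membership test to certifying two structural invariants of the hidden DAG $\cH$: its \emph{skeleton} and its \emph{v-structures}. By the characterization recalled in Section~\ref{sec:mec} (namely $\cH\in[\cG]$ iff $\cH$ and $\cG$ share the same skeleton and the same v-structures), it suffices to design a family of CI tests whose answers, compared against the MEC-invariant reference answers dictated by $\cE(\cG)$, certify both invariants. Since every CI statement has the same truth value across all DAGs in a fixed MEC, each reference answer is well defined and computable in polynomial time from $\cE(\cG)$, and the algorithm accepts iff every queried test agrees with its reference value. The entire difficulty is to choose such a family of size $2^{O(s)}\cdot\mathrm{poly}(n)=\exp(O(s)+O(\log n))$ while preserving soundness and completeness.

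For the skeleton I would treat the two failure modes separately. To rule out \emph{extra} edges (a non-edge of $\cE(\cG)$ that is present in $\cH$), I compute one acyclic completion $\cG_0\in[\cG]$ and, for each non-adjacent pair $(i,j)$, read off a canonical d-separator $S_{ij}$ from $\cG_0$ (e.g.\ the parents of the topologically later endpoint). Because CI is MEC-invariant, $I_\cH(i,j\mid S_{ij})=1$ for every $\cH\in[\cG]$, whereas an adjacent pair can never be d-separated; hence a single test per non-edge, $O(n^2)$ in total, detects any spurious edge. The harder mode is detecting \emph{missing} edges and, at the same time, recovering the separators needed to read off v-structures, which is exactly where the clique parameter $s$ enters.

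The key structural lemma I would prove is that, assuming $\cH\in[\cG]$, the adjacency of any pair $(i,j)$ can be decided by searching only a candidate family $\cC_{ij}$ of separating sets of the special form ``the known directed parents $\pa_{[\cG]}(\cdot)$ (read from $\cE(\cG)$) together with a subset of an undirected clique''. Concretely, since every essential graph is a chain graph with chordal chain components, any minimal separator residing inside a chain component is a clique, hence of size at most $s$; combining such a clique (searched over its $2^{O(s)}$ subsets) with the determined directed parents yields a valid d-separator whenever $i,j$ are non-adjacent in $\cH$. The crucial point is that only the undirected clique part is unknown and enumerated, whereas the directed parents -- which may number up to $d$ -- are simply \emph{included}, never searched over; this is precisely what replaces the $2^{O(d)}$ cost of learning by a $2^{O(s)}$ cost. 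Given the discovered separators, v-structures are verified in the usual constraint-based manner: for a non-adjacent pair $(i,j)$ with common neighbour $k$, $k$ is a collider iff it lies outside the separator, and the outcome is matched against $\cE(\cG)$.

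The step I expect to be the main obstacle is establishing completeness against the missing-edge mode within the clique budget: if $\cH$ omits an edge of $\cE(\cG)$, I must guarantee that the clique-bounded family $\cC_{ij}$ still exposes a separator (so that the reference ``not separated'' test flips to $1$), even though the structural lemma above was derived under the hypothesis $\cH\in[\cG]$. I would handle this by first using the non-edge tests, whose success forces $\skel(\cH)\subseteq\skel(\cG)$, and then arguing that, under this containment, the chordality of the chain components still confines any genuine separator of a missing edge to a clique of size at most $s$, so that enumerating the $2^{O(s)}$ clique-subset conditioning sets (augmented by the known directed parents) cannot miss it. Verifying that this clique-locality survives the passage from $\cE(\cG)$ to an arbitrary $\cH$ with $\skel(\cH)\subseteq\skel(\cG)$, and then assembling all per-pair searches into the global $\exp(O(s)+O(\log n))$ bound with polynomial per-query runtime, is the technical heart of the argument.
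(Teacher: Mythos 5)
Your test family and overall architecture coincide with the paper's: the $O(n^2)$ non-edge tests with canonical separators are its class-I tests, and the conditioning sets of the form $\pa_{[\cG]}(i)\cup C$ with $C$ an undirected clique in $\adj_{[\cG]}(i)$ are exactly its class-II tests, counted the same way ($\leq n$ maximal cliques in the chordal chain components, $2^{O(s)}$ subsets each, $\mathrm{poly}(n)$ pairs). However, the step you yourself flag as the technical heart --- completeness against a missing edge --- is a genuine gap, and the route you sketch for it would not go through. The difficulty is that a minimal d-separator of a missing edge lives in the \emph{hidden} DAG $\cH$, whose parent sets are arbitrary; chordality of the chain components of $\cE(\cG)$ constrains separators computed in members of $[\cG]$, not separators in an arbitrary $\cH$ with $\skel(\cH)\subseteq\skel(\cG)$. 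So there is no direct argument that "any genuine separator of a missing edge" has the enumerated form $\pa_{[\cG]}(i)\cup C$.

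The paper closes this gap differently, in two moves you are missing. First, passing the class-I tests is shown to imply the full independence-map relation $\cH\leq\cG$ (via the equivalence of local and global Markov properties), not merely $\skel(\cH)\subseteq\skel(\cG)$; the weaker skeleton containment you extract is not enough for what follows. Second, given $\cH\leq\cG$ with a strict skeleton inclusion, Chickering's proof of Meek's conjecture yields an intermediate DAG $\cH'$ with $\cH\leq\cH'\leq\cG$ such that $\cH'$ is exactly one edge short of some $\cG'\in[\cG]$. The local Markov property applied to $\cH'$ (not to $\cH$) gives $i\ci j\mid_{\cH'}\pa_{\cG'}(i)\setminus\{j\}$ for the missing edge $j\to i$, and a separate lemma shows $\pa_{\cG'}(i)=\pa_{[\cG]}(i)\cup C$ for an undirected clique $C\subseteq\adj_{[\cG]}(i)$ because $\cG'\in[\cG]$. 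Since $\cH\leq\cH'$, this independence transfers to $\cH$ and a class-II test fails. Your proposal needs this sandwich construction (or an equivalent substitute) to justify that the clique-bounded family is complete; without it, the claim that the enumeration "cannot miss" a separator is unsupported.
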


Our bounds exhibit instance-dependent \emph{tightness} for any given MEC.
In the remaining part of this section, we will provide a concise overview of the techniques used to establish our results.

\subsection{Overview of Techniques}\label{sec:overview_techniques}


\paragraph{Lower Bound}
We first discuss our lower bound result for the simple case where $\cE(\cG)$ is an undirected clique. Let the hidden causal graph be obtained by removing an edge $i \to j$ from a DAG $\cG$ that belongs to the MEC, where $\pa_\cG(i)=\varnothing$. Given such $\cH$, we can show that the only set of independence test queries that differentiate $[\cH]$ from $[\cG]$ are of the form: \[i \ci j\mid \ch_{\cG}(i)\cap\pa_\cG(j).\]
As any subset of nodes in the clique could lie between $i$ and $j$ for some $\cG$ in the MEC, we immediately get a worst-case lower bound of $\binom{s}{\lceil s/2\rceil-1}=\exp(\Omega(s))$.

The above result naturally extends to MECs with general $\cE(\cG)$ by utilizing the properties that we discussed in Section~\ref{sec:mec}. Namely, any clique in a connected component of the specified MEC could be made most upstream, and therefore we could ignore all the remaining nodes in that component. A formal proof of this result is provided in Section~\ref{sec:lowerbound}.

We remark here that our result is an instance-dependent bound with respect to (any) $\cG$, which is more general than considering only fully connected $\cG$'s.

\paragraph{Upper Bound} We first show that if the specified MEC contains additional independence relations that are not in the hidden DAG, then this can be detected with $O(n^2)$ queries. 
This holds because, for each $i\not\sim j$ in the essential graph $\cE(\cG)$, one can easily find a set $C$ such that $i \ci j\mid_\cG C$. However, if $I_\cH(i,j\mid C)=0$, then this query quickly reveals $\cH\notin [\cG]$. On the other hand, if $I_\cH(i,j\mid C)=1$, then $i\ci j\mid_\cH C$. Since d-separation relations exhibit axiomatic properties, these can be used to show $\cH\leq\cG$. We call such tests canonical CI tests, which we formally define in Section~\ref{sec:canonical-ci}.

Along with these CI tests, we define another type of canonical CI tests utilitizing the undirected cliques in $\cE(\cG)$. These two types of tests together resolve the case where the hidden graph contains a missing edge in the specified MEC. In particular, if $\cH$ is missing an edge and $\cH\leq\cG$, then a result by \citet{chickering2002optimal} proving Meek's conjecture \citep{meek1995} implies that there is a DAG $\cH'$ such that $\cH\leq \cH'\leq\cG$ and $\cH'$ is one-edge away from some $\cG'\in [\cG]$. Using the \emph{local Markov property} \citep{lauritzen1996graphical}, this missing edge $i\sim j$ can be detected by $i\ci j\mid_\cH C$ for some set $C$ that contains parents of one of the nodes $i,j$. We then show that this set $C$ can be obtained via an undirected clique within $\cE(\cG)$, if the hidden graph passes the canonical CI tests defined next. Therefore we can detect this case with $\exp(O(s))$ number of CI queries. 

These results imply that we only need to consider the remaining case where $\skel(\cH)=\skel(\cG)$. When the skeleta align, we show that it is easy to test whether the v-structures align. Detailed algorithms and proofs are provided in Section~\ref{sec:upperbound}. Additionally, we provide a geometric view of these results in Section~\ref{sec:polytope}.

\subsection{Canonical CI Test Oracles}\label{sec:canonical-ci}

We now define two types of canonical CI test oracles (illustrated in Fig.~\ref{fig:3}) that will be used in our algorithms for the testing problem of whether $\cH\in[\cG]$.

\begin{definition}[Class-I CI Test]\label{def:class-i} 
For $i\not\sim j$ in $[\cG]$, 
there is $i\ci j\mid_\cG \pa_\cG(i)\setminus\{j\}$, assuming $j\notin\de_\cG(i)$ without loss of generality. Test if $I_\cH(i,j\mid \pa_\cG(i)\setminus\{j\})=1$.
\end{definition}

We make a few remarks on these tests. First, the d-separation claim about $\cG$ comes from the local Markov property \citep{lauritzen1996graphical}. Second, these tests are equivalent to testing if the underlying joint distribution $\bbP$ factorizes with respect to $\cG$, which is a necessary condition for $\cH\in[\cG]$.

\begin{definition}[Class-II CI Test]\label{def:class-ii}  
For $i\sim j$ in $[\cG]$, there is $i\nci j\mid_\cG \big(\pa_{[\cG]}(i)\cup C\big)\setminus\{j\}$ for all undirected cliques $C\subseteq\adj_{[\cG]}(i)$. Test if $I_\cH\big(i,j\mid\big(\pa_{[\cG]}(i)\cup C\big)\setminus\{j\}\big)=0$.
\end{definition}

\begin{figure}[t]
    \centering
     \begin{subfigure}{0.43\textwidth}
         \centering
         \includegraphics[width=\textwidth]{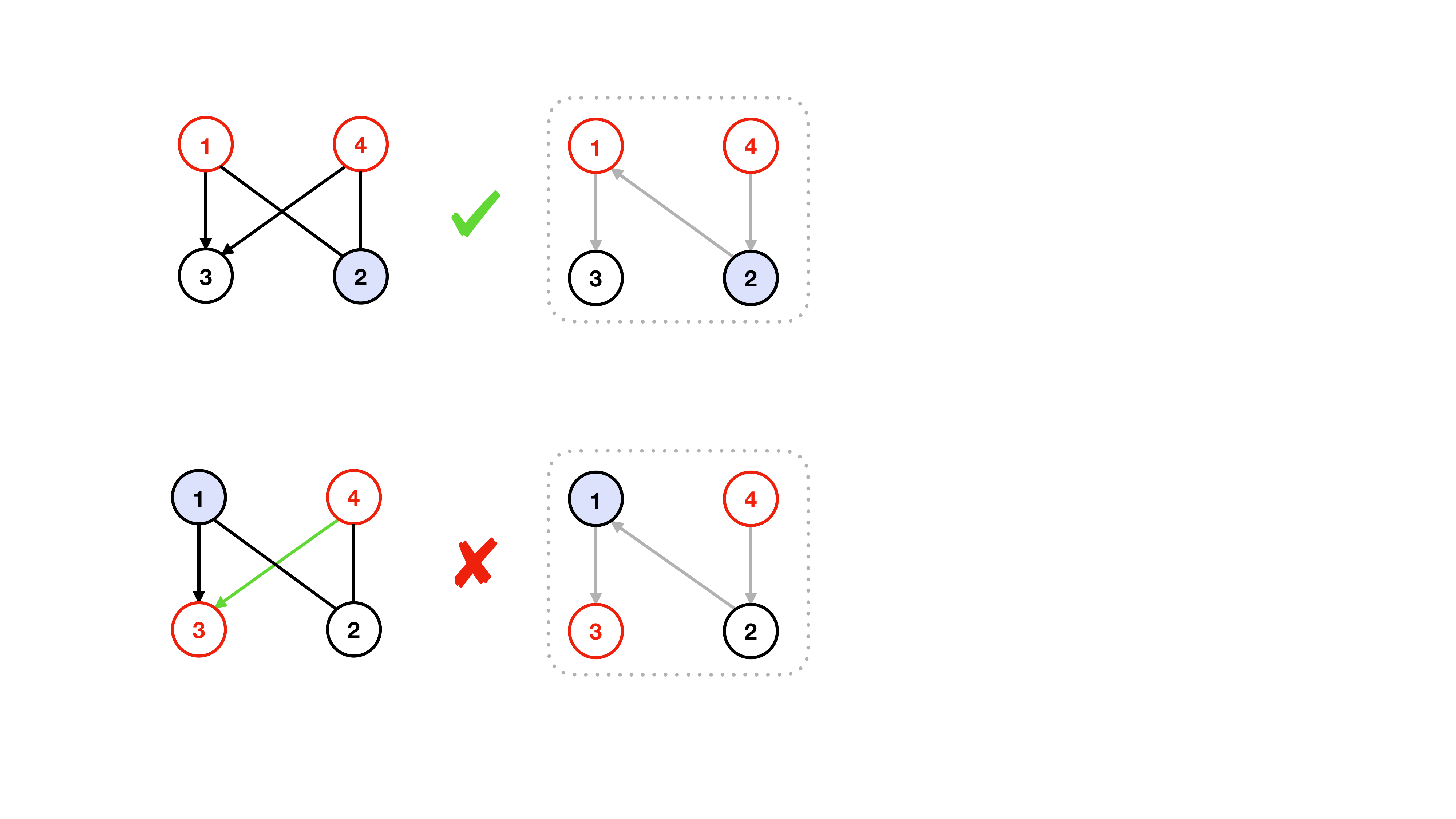}
         \caption{Class-I CI Test}
     \end{subfigure}
     \hfill
     \begin{subfigure}{0.43\textwidth}
         \centering
         \includegraphics[width=\textwidth]{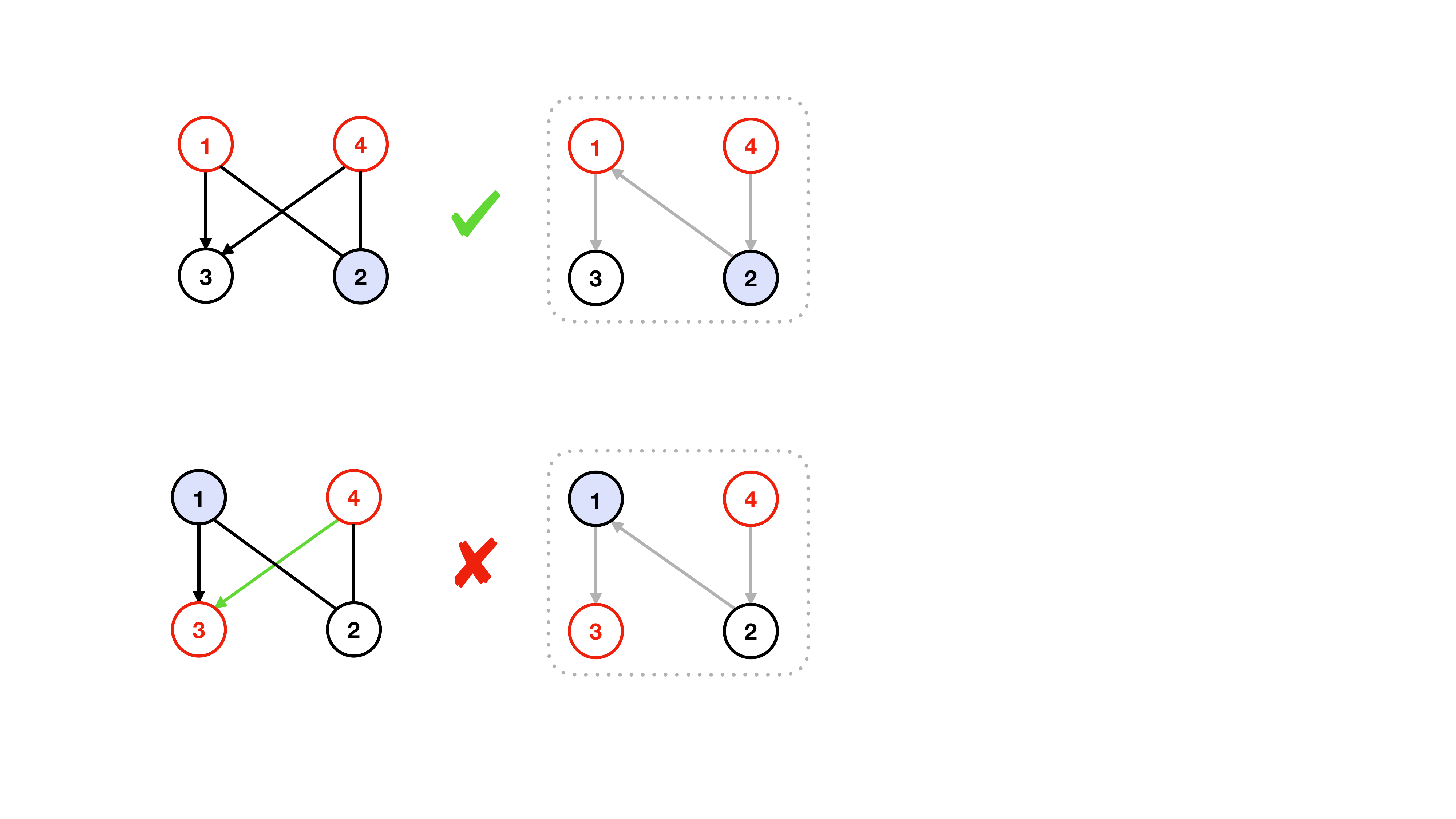}
         \caption{Class-II CI Test}
     \end{subfigure}     
\caption{Examples of canonical CI tests. The given MEC $\cG$ is on the left, and the hidden $\cH$ is on the right. \textbf{(a)}~Class-I CI test $1\ci 4\mid 2$ agrees between $\cH,\cG$. \textbf{(b)}~Class-II CI test $3\ci 4\mid 1$ disagrees between $\cH,\cG$.}\label{fig:3}
\end{figure}

We will show in Section~\ref{sec:upperbound} that if all the class-I and class-II canonical independence queries are satisfied in the hidden graph $\cH$, then $\cH$ has to be in $[\cG]$.

\section{LOWER BOUND}\label{sec:lowerbound}

In Section~\ref{sec:overview_techniques}, we explained how the lower bound example is constructed; namely by considering a DAG $\cH$ that is missing one undirected edge in $[\cG]$. We now provide steps towards Theorem~\ref{thm:lowerbound}. All omitted proofs can be found in Appendix~\textcolor{red}{2}.

A key lemma is to show that when two DAGs are very similar, namely, one is missing only an edge that is undirected in the MEC of the other, they share certain active/inactive paths.

\begin{lemma}\label{lm:2dag1var}
    Let $\cG_1$ and $\cG_2$ be two DAGs such that $\cG_1$ differs from $\cG_2$ \textup{only} by missing one edge $i\to j$, which is undirected in $\cE(\cG_2)$. Let $\cP:a-\dots-b$ be a common path shared by $\cG_1$ and $\cG_2$. For any arbitrary set $C\subseteq [n]\setminus\{a,b\}$, 
    \begin{enumerate}
        \vspace*{-0.12in}
        \item[(1)] if $\cP$ is \textup{inactive} given $C$ in $\cG_2$, then it must be \textup{inactive} given $C$ in $\cG_1$; 
        \vspace*{-0.08in}
        \item[(2)]  if $\cP$ is \textup{active} given $C$ in $\cG_2$, then there is a path connecting $a,b$ that is \textup{active} given $C$ in $\cG_1$.
        \vspace*{-0.08in}
    \end{enumerate}
\end{lemma}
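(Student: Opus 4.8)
The plan is to fix the common path $\cP$ and track, node by node, how its activity status can change when the single edge $i\to j$ is deleted. Two observations drive everything. First, since $\cP$ is common to $\cG_1$ and $\cG_2$ and does not use the edge $i\to j$, every edge of $\cP$ has the same orientation in both graphs; hence each internal node of $\cP$ is a collider in $\cG_1$ exactly when it is a collider in $\cG_2$. Second, deleting an edge can only shrink descendant sets, so $\cde_{\cG_1}(v)\subseteq\cde_{\cG_2}(v)$ for every node $v$. In particular, membership of a node in $C$ and the collider/non-collider status along $\cP$ are unaffected by the deletion; only the descendant-in-$C$ condition at colliders can change, and it can only change from ``satisfied'' to ``violated''.

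Part (1) then follows immediately. If $\cP$ is inactive given $C$ in $\cG_2$, it is blocked either at a non-collider $c\in C$ or at a collider $d$ with $\cde_{\cG_2}(d)\cap C=\varnothing$. In the first case $c$ is still a non-collider of $\cP$ lying in $C$, so it still blocks in $\cG_1$. In the second case $\cde_{\cG_1}(d)\subseteq\cde_{\cG_2}(d)$ gives $\cde_{\cG_1}(d)\cap C=\varnothing$, so $d$ still blocks in $\cG_1$. Either way $\cP$ is inactive given $C$ in $\cG_1$.

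For Part (2), suppose $\cP$ is active given $C$ in $\cG_2$. All non-collider conditions transfer verbatim, so the only way $\cP$ can fail to be active in $\cG_1$ is if some collider $d$ of $\cP$ has $\cde_{\cG_2}(d)\cap C\neq\varnothing$ but $\cde_{\cG_1}(d)\cap C=\varnothing$; equivalently, every directed path from $d$ into $C$ in $\cG_2$ routes through the deleted edge, say $d\to\cdots\to i\to j\to\cdots\to c$ with $c\in C$. Splitting at $i\to j$, the prefix $d\to\cdots\to i$ and the suffix $j\to\cdots\to c$ both survive in $\cG_1$, so $d$ still reaches $i$ and $j$ still reaches $C$ there. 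I would then re-activate $d$ by reconnecting across the gap: look at the last edge $t\to i$ of a surviving directed path $d\to\cdots\to i$. If $t\to i$ is compelled (directed in $\cE(\cG_2)$), then since $i\to j$ is undirected in $\cE(\cG_2)$, Meek's orientation rule \citep{meek1995} forbids $t\not\sim j$ (otherwise $i\to j$ would be forced), so $t\sim j$; acyclicity together with $t\to i\to j$ then forces $t\to j$, yielding the alternate route $d\to\cdots\to t\to j\to\cdots\to c$ in $\cG_1$. Hence $d$ regains a descendant in $C$ and $\cP$ itself remains active. I would also note that in the regime relevant to the lower bound, where $\pa_{\cG_2}(i)=\varnothing$, no such problematic $d$ exists: using $i\to j$ requires first arriving at the source $i$, which is impossible from any collider $d\neq i$, so $\cP$ is automatically active in $\cG_1$.

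The main obstacle is the complementary case in which every directed path from $d$ to $i$ in $\cG_1$ enters $i$ through an edge that is undirected in $\cE(\cG_2)$; then $t$, $i$, and $j$ all lie in one chordal chain component and the one-step reroute above is unavailable. Handling this case is where the hypothesis ``$i\to j$ undirected in $\cE(\cG_2)$'' must be used in earnest, rather than the stronger-looking but false claim that such an edge is covered (witness $a\to b\to c$, where $b\to c$ is undirected yet not covered). I expect to resolve it using the essential-graph structure recalled in Section~\ref{sec:mec}: chordality of the chain component, the v-structure-free orientation it carries, and the fact that the undirected clique containing $i\sim j$ can be placed most upstream. These should let me either reroute the descendant path within the component or construct an altogether new active path between $a$ and $b$ in $\cG_1$, which is the delicate part of the argument.
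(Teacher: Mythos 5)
Your Part (1) is correct and coincides with the paper's argument. Part (2), however, has a genuine gap: you prove only the case where the surviving directed path from the problematic collider $d$ enters $i$ through a compelled edge $t\to i$, and you explicitly defer the complementary case (where every such path enters $i$ through an edge that is undirected in $\cE(\cG_2)$) to unstated future work. That deferred case is precisely the hard part of the lemma, so the proof is incomplete. Two further points suggest the local reroute cannot simply be patched: chordality of the chain component does not force $t\sim j$ when $t-i$ and $i-j$ are both undirected (a path graph $t-i-j$ is chordal), so the one-step reconnection is genuinely unavailable there; and your stated goal of keeping ``$\cP$ itself active'' is too strong in general --- e.g.\ with $\cP: a\to i\gets b$, $C=\{j\}$, and $i\to j$ deleted, the collider $i$ can irrecoverably lose its only descendant in $C$, and the active path in $\cG_1$ must be a different one (here $a\to j\gets b$). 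The lemma's conclusion is deliberately only existential for this reason. The remark that the lower-bound regime has $\pa_{\cG_2}(i)=\varnothing$ does not help either, since the lemma is invoked for arbitrary $\cG$.

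For comparison, the paper closes this case with a global extremal argument rather than a local reroute: among all common $a$--$b$ paths active given $C$ in $\cG_2$, it picks one, $\cQ$, of minimal length $L$ and, among those, minimizing the length $L_C(\cQ)$ of the shortest directed path from the bad collider $d$ to $C$ in $\cG_2$. Writing $e\to d\gets f$ on $\cQ$ and $d\to d_1\to\cdots\to i\to j\to\cdots\to c$ for that directed path, it splits on whether $e\sim f$. If $e\not\sim f$, either some $d_{k'}$ is adjacent to both $e$ and $f$, in which case replacing $d$ by $d_{k'}$ on $\cQ$ preserves $L$ but strictly decreases $L_C$ (contradiction), or no $d_{k'}$ is, in which case Meek rule~1 applied recursively along $d\to d_1\to\cdots\to j$ forces $i\to j$ to be directed in $\cE(\cG_2)$ (contradiction). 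If $e\sim f$, shortcutting $\cQ$ through the edge $e\to f$ yields a strictly shorter active common path, contradicting minimality of $L$. Notice that this argument performs surgery on the $a$--$b$ path itself, not on the descendant path of $d$, which is exactly the flexibility your approach is missing.
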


 An immediate corollary of this lemma is the following statement about d-separations in such DAGs.

\begin{corollary}\label{cor:2dag1var-c}
     Let $\cG_1,\cG_2$ be two DAGs as in Lemma~\ref{lm:2dag1var}. For any nodes $a\neq b\in [n]$ and set $C\subset [n]\setminus\{a,b\}$, 
    \begin{enumerate}
        \vspace*{-0.12in}
        \item[(1)] if $a\ci b\mid_{\cG_2} C$, then $a\ci b\mid_{\cG_1} C$;
        \vspace*{-0.08in}
        \item[(2)]  if $a\nci b\mid_{\cG_2} C$ and there is path from $a$ to $b$ in $\cG_1$ that is active given $C$ in $\cG_2$, then $a\nci b\mid_{\cG_1} C$. 
        \vspace*{-0.08in}
    \end{enumerate}  
\end{corollary}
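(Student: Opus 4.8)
The plan is to derive Corollary~\ref{cor:2dag1var-c} as a direct consequence of Lemma~\ref{lm:2dag1var}, which already characterizes how individual paths behave when passing from $\cG_2$ to $\cG_1$. The key observation is that d-separation is a statement about \emph{all} paths between $a$ and $b$: we have $a\ci b\mid_\cH C$ precisely when every path connecting $a$ and $b$ is inactive given $C$. So both parts should follow by quantifying the per-path statements of the lemma over the relevant collection of paths, with the main subtlety being that $\cG_1$ and $\cG_2$ do not have exactly the same set of paths (since $\cG_1$ is missing the edge $i\to j$).

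\textbf{Part (1).} First I would assume $a\ci b\mid_{\cG_2} C$, meaning every path from $a$ to $b$ in $\cG_2$ is inactive given $C$. I want to show every path from $a$ to $b$ in $\cG_1$ is inactive given $C$. The clean way is to note that since $\cG_1$ is obtained from $\cG_2$ by \emph{deleting} an edge, every path in $\cG_1$ is also a path in $\cG_2$ (the edge set only shrinks). Take an arbitrary path $\cP$ in $\cG_1$; it is a common path shared by $\cG_1$ and $\cG_2$, and by assumption it is inactive in $\cG_2$. Then Lemma~\ref{lm:2dag1var}(1) applies directly to conclude $\cP$ is inactive given $C$ in $\cG_1$. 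Since this holds for every path, we get $a\ci b\mid_{\cG_1} C$.

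\textbf{Part (2).} Here I assume $a\nci b\mid_{\cG_2} C$ together with the hypothesis that there exists a path from $a$ to $b$ \emph{that lies in $\cG_1$} and is active given $C$ in $\cG_2$. Call this path $\cP$; it is a common path of $\cG_1$ and $\cG_2$ that is active in $\cG_2$. Applying Lemma~\ref{lm:2dag1var}(2) to $\cP$ yields a path connecting $a$ and $b$ that is active given $C$ in $\cG_1$. The existence of even a single active path in $\cG_1$ shows $a$ and $b$ are not d-separated by $C$ in $\cG_1$, i.e., $a\nci b\mid_{\cG_1} C$.

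I expect the only real subtlety to be bookkeeping about which graph a given path belongs to, and in particular making precise why the hypothesis in Part~(2) is phrased as "a path in $\cG_1$ that is active in $\cG_2$" rather than just "an active path in $\cG_2$": an arbitrary active path of $\cG_2$ might use the deleted edge $i\to j$ and hence need not exist in $\cG_1$, so the lemma's common-path requirement would fail. Restricting to paths present in $\cG_1$ is exactly what makes Lemma~\ref{lm:2dag1var}(2) applicable, and that is the hinge of the argument. Both parts are then immediate from the lemma, so the corollary is essentially a quantification-over-paths restatement with no additional computation required.
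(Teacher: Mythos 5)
Your proposal is correct and follows essentially the same argument as the paper's own proof: part (1) applies Lemma~\ref{lm:2dag1var}(1) to every path of $\cG_1$ (each being a common path since $\cG_1$'s edge set is a subset of $\cG_2$'s), and part (2) applies Lemma~\ref{lm:2dag1var}(2) to the hypothesized common active path. Your closing remark on why the hypothesis of part (2) must restrict to paths present in $\cG_1$ is accurate and matches the role that hypothesis plays in the paper.
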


\begin{proof} 
If $a\ci b\mid_{\cG_2} C$, then all paths from $a$ to $b$ are inactive given $C$ in $\cG_2$. Since every path in $\cG_1$ is a path in $\cG_2$, all paths from $a$ to $b$ are inactive given $C$ in $\cG_1$ as well by Lemma \ref{lm:2dag1var} (1). Thus $a\ci b\mid_{\cG_1} C$.

If $a\nci b\mid_{\cG_2} C$ and there is path from $a$ to $b$ in $\cG_1$ that is active given $C$ in $\cG_2$, this would be a common path shared by $\cG_1,\cG_2$. By Lemma \ref{lm:2dag1var} (2), there must be a path connecting $a,b$ that is active given $C$ in $\cG_1$. Thus $a\nci b\mid_{\cG_1} C$.
\end{proof}

Using Corollary~\ref{cor:2dag1var-c}, we can now show that if a CI test disagrees between such two similar DAGs, then the conditioned set must intersect a particular neighborhood of the missing edge.

\begin{lemma}\label{lm:2dag1ineq}
Let $\cG_1,\cG_2$ be two DAGs that differ by a missing edge $i\to j$ as in Lemma~\ref{lm:2dag1var}. Denote the maximal undirected clique in $\cG_2$ containing $i,j$ by $\cS$. Then $I_{\cG_1}(A,B\mid C)\neq I_{\cG_2}(A,B\mid C)$ \textup{only} if $A\ci B\mid_{\cG_1} C$, $A\nci B\mid_{\cG_2} C$, and 
\begin{align*}
\left(\pa_{\cG_2}(j)\cap\ch_{\cG_2}(i)\right) \cap \cS &\subseteq\\ C\cap \cS &\subseteq \left(\pa_{\cG_2}(j)\setminus\{i\}\right) \cap \cS.
\end{align*}
\end{lemma}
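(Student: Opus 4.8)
The plan is to first pin down the direction of any disagreement, and then to convert the event ``deleting $i\to j$ creates a new d-separation'' into the stated two-sided containment by rerouting an active path through the clique $\cS$.

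\emph{Direction and reduction.} By Corollary~\ref{cor:2dag1var-c}(1), $a\ci b\mid_{\cG_2}C\Rightarrow a\ci b\mid_{\cG_1}C$ for every pair, so $I_{\cG_2}(A,B\mid C)=1\Rightarrow I_{\cG_1}(A,B\mid C)=1$; hence a disagreement can only be $A\nci B\mid_{\cG_2}C$ together with $A\ci B\mid_{\cG_1}C$, which is the first half of the claim. Fix $a\in A,b\in B$ and an active path $P$ from $a$ to $b$ given $C$ in $\cG_2$. If $P$ did not use $i\to j$ it would be a common path of $\cG_1,\cG_2$, and Lemma~\ref{lm:2dag1var}(2) would produce an active $\cG_1$ path, contradicting $A\ci B\mid_{\cG_1}C$; hence $P$ traverses $i\to j$. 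Since $\cS$ is a clique in the DAG $\cG_2$, it is a transitive tournament; write its topological order as $v_1,\dots,v_k$, so that $i=v_p$, $j=v_q$ with $p<q$. Then $(\pa_{\cG_2}(j)\cap\ch_{\cG_2}(i))\cap\cS=\{v_{p+1},\dots,v_{q-1}\}$ and $(\pa_{\cG_2}(j)\setminus\{i\})\cap\cS=\{v_1,\dots,v_{q-1}\}\setminus\{v_p\}$, so the target is $\{v_{p+1},\dots,v_{q-1}\}\subseteq C\cap\cS\subseteq\{v_1,\dots,v_{q-1}\}\setminus\{v_p\}$. Because $i\to j$ leaves $i$ along $P$, the node $i$ is a non-collider, giving $i=v_p\notin C$.

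\emph{Lower containment.} To force $v_m\in C$ for each $p<m<q$, suppose some such $v_m\notin C$ and reroute $P$ in $\cG_1$ by replacing $i\to j$ with the clique path $i\to v_m\to j$. At $i$ the replaced edge is still outgoing (so $i$ stays a non-collider, $i\notin C$); $v_m$ is a chain, active since $v_m\notin C$; and the edge entering $j$ is still incoming, so the type of $j$ is unchanged. Crucially, the route $i\to v_m\to j$ itself shows that deleting $i\to j$ removes no descendants, i.e. $\cde_{\cG_1}(w)=\cde_{\cG_2}(w)$ for all $w$, so every collider of $P$ keeps an open descendant. The rerouted path is therefore active in $\cG_1$, contradicting $A\ci B\mid_{\cG_1}C$; hence $v_m\in C$.

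\emph{Upper containment.} I first rule out $j=v_q\in C$. If $j\in C$, then $j$ cannot be a non-collider on the active $P$ (it would block), so $j$ is a collider $i\to j\from y$. Since $i\to j$ is undirected in $\cE(\cG_2)$ it lies in no v-structure, so $y\sim i$; splicing $j$ out and joining the two arms of $P$ by the clique edge $i-y$ yields a simple $\cG_1$ path, and any collider created at the junctions is opened by $j\in C$ (using that $j$ is still a common descendant of $i$ and $y$ in $\cG_1$), contradicting $A\ci B\mid_{\cG_1}C$; hence $j\notin C$. Given $j\notin C$, suppose a child $v_m$ with $m>q$ lies in $C$ and reroute $i\to j$ as $i\to v_m\from j$: now $v_m$ is a collider opened by $v_m\in C$, while the clique edge at $j$ is outgoing, making $j$ a non-collider that is active because $j\notin C$. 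Again we obtain an active $\cG_1$ path, a contradiction, so $v_m\notin C$ for all $m>q$. With $i,j\notin C$ this gives $C\cap\cS\subseteq\{v_1,\dots,v_{q-1}\}\setminus\{v_p\}$.

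\emph{Main obstacle.} The delicate part is not selecting the reroutes but certifying that each stays \emph{active} in $\cG_1$: deleting $i\to j$ can only shrink descendant sets, which threatens to close colliders elsewhere on $P$. The transitive-tournament view of $\cS$ is what saves us, since any descendant reachable only through $i\to j$ is recovered through a clique vertex strictly between $i$ and $j$; the one configuration needing separate (and easier) treatment is when $i$ and $j$ are consecutive in the order of $\cS$, where the lower-containment set is empty and the collider-at-$i$ junction above must be reopened by hand, together with the routine bookkeeping of shortcutting any repeated vertex so the rerouted walks are genuine paths. I expect this activeness-preservation accounting to be the principal technical burden.
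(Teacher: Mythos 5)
Your construction is the same as the paper's — identify an active path $\cP$ in $\cG_2$ that must traverse $i\to j$, then reroute the $i\to j$ step through clique vertices of $\cS$ to contradict $A\ci B\mid_{\cG_1}C$ — and your tournament-order description of the two containment sets is correct; you even treat the sub-case $j\in C$ explicitly, which the paper's write-up glosses over. The gap is in how you certify that the rerouted paths witness a contradiction. You verify activeness \emph{in $\cG_1$}, which forces you to argue that every collider of $\cP$ keeps an open descendant after deleting $i\to j$. Your justification, $\cde_{\cG_1}(w)=\cde_{\cG_2}(w)$ for all $w$, is valid only when some clique vertex $v_m$ with $p<m<q$ supplies the detour $i\to v_m\to j$. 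When $i$ and $j$ are consecutive in $\cS$ (and no other directed $i$-to-$j$ path survives in $\cG_1$), descendant sets genuinely shrink. You flag the resulting closed collider at the $i$-junction in the $j\in C$ splice, but you do not flag the same failure for the \emph{other} colliders of $\cP$ in your $m>q$ reroute: a collider $w$ elsewhere on $\cP$ whose only directed route to $C$ in $\cG_2$ passes through $i\to j$ is closed in $\cG_1$, and the rerouted path is then inactive in $\cG_1$. Calling the consecutive case ``easier'' and promising to reopen colliders ``by hand'' is not a proof — reopening them is precisely the nontrivial content of Lemma~\ref{lm:2dag1var}(2), which guarantees only that \emph{some} active $\cG_1$ path exists, not that your chosen reroute is it.

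The paper avoids all of this with one move you already have available but do not use: check that each rerouted path is active given $C$ \emph{in $\cG_2$} (there the collider bookkeeping of $\cP$ transfers verbatim, since descendant sets are unchanged and the only new interior vertex is a non-collider outside $C$ or a collider inside $C$), observe that the rerouted path contains no occurrence of $i\to j$ and hence is a common path lying in $\cG_1$, and invoke Corollary~\ref{cor:2dag1var-c}(2) to conclude $a\nci b\mid_{\cG_1}C$. Replacing your in-$\cG_1$ activeness accounting with this step closes the gap uniformly, including the consecutive case and the $j\in C$ splice (where $j\in\cde_{\cG_2}(i)\cap C$ opens the new collider at $i$ in $\cG_2$, with no need for $j$ to remain a descendant of $i$ in $\cG_1$).
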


This result lays the crucial step for Theorem~\ref{thm:lowerbound}, since the particular neighborbood of the missing edge can require $\exp(\Omega(s))$ tests to be identified in the worst case.

\begin{proof}[Proof of Theorem~\ref{thm:lowerbound}.]
Let $\cS$ be the maximum undirected clique in $\cE(\cG)$ with size $s$. Denote $i,j$ as the $1$-st and $\lfloor s/2\rfloor$-th nodes in the topological order\footnote{The \emph{topological order} $\pi : [n] \to [n]$ associated to a DAG $\cG$ is such that any $i\to j$ in $\cG$ satisfies $\pi(i) < \pi(j)$.} of nodes of $\cS$ in the DAG $\cG$. Let $K$ be the set of all nodes that lie in between $i$ and $j$ in the topological order.

Let $\cH$ be the DAG obtained by removing $i\to j$ from $\cG$. We will show that any algorithm requires at least $\binom{s}{\lceil s/2\rceil-1}$ CI tests to verify $\cH\notin[\cG]$ in the worst case.

Since $\cS$ is undirected, $i\to j$ must be undirected in $\cE(\cG)$. 
Using Lemma~\ref{lm:2dag1ineq} for $\cG$ and $\cH$, any disjoint sets $A,B,C$ satisfy that $I_{\cH}(A,B\mid C)\neq I_{\cG}(A,B\mid C)$ only if $C\cap \cS=K$ (i.e., the nodes of $\cS$ that are in $C$ are exactly $K$), since
\begin{align*}
K = \left(\pa_{\cG}(j)\cap\ch_{\cG}(i)\right) \cap \cS \subseteq C\cap \cS &\\ \subseteq \left(\pa_{\cG}(j)\setminus\{i\}\right) \cap \cS & = K.
\end{align*}
Therefore any CI test of $A,B$ given $C$ would agree between $\cH$ and $\cG$ if $C\cap \cS\neq K$. However, since $\cS$ is an undirected clique in $\cE(\cG)$, its nodes can be ordered arbitrarily to obtain a valid DAG in $[\cG]$. Therefore, no additional information can be learned about $i,j$ by performing CI tests until $C\cap \cS= K$. Since all topological orders can be valid, it can take $\binom{s}{|K|}=\binom{s}{\lceil s/2\rceil-1}$ CI tests until $C\cap \cS= K$ in the worst case.
\end{proof}

We make some final remarks about Theorem~\ref{thm:lowerbound}. First, it is a worst-case result over all possible hidden graphs $\cH$. Second, since $s$ depends on $[\cG]$, this lower bound is instance-wise with respect to the MEC of interest.

\section{UPPER BOUND}\label{sec:upperbound}

We now present our upper bound results. All omitted proofs can be found in Appendix~\textcolor{red}{3}.

We begin by presenting our algorithm for testing. Our algorithm is built upon the canonical CI tests introduced in Section~\ref{sec:canonical-ci} (Definitions~\ref{def:class-i},\ref{def:class-ii}).

\begin{algorithm}[tbh]
\begin{algorithmic}[1]
\caption{Membership Testing in MEC.}
\label{alg:1}
    \State \textbf{Input}: MEC $[\cG]$ and independence-query oracle access to hidden DAG $\cH$.
    \State \textbf{Output}: whether hidden $\cH$ belongs to $[\cG]$.
    \State Perform all class-I CI tests with respect to $\cG$ and $\cH$ sequentially; \textbf{return} False once $\cH$ fails.
    \State \textbf{for} $i\sim j$ in $[\cG]$ \textbf{do}
        \State\hspace{10pt}\textbf{for} undirected clique $C$ in $\adj_{[\cG]}(i)$ \textbf{do}
        \State\hspace{20pt} Test $I_\cH\big(i,j\mid\big(\pa_{[\cG]}(i)\cup C\big)\setminus\{j\}\big)$. \State\hspace{20pt} \textbf{return} False if independence.
    \State\hspace{10pt} Perform line 4-6 for $j$.   
    \State \textbf{return} True.
\end{algorithmic}
\end{algorithm}

Note that the total number of maximal undirected cliques in $[\cG]$ cannot exceed $n$ (see Appendix~\textcolor{red}{3}). Since each undirected clique must belong to some maximal undirected clique whose size $\leq s$, the total number of undirected cliques cannot exceed $n\cdot 2^{s}$. Thus, the total number of class-II CI tests performed is bounded by $n^3\cdot 2^{s}=\exp(O(s)+O(\log n))$.

We prove the correctness of Algorithm~\ref{alg:1} by showing that if $\cH\notin[\cG]$, then $\cH$ fails at least one of the class-I or class-II CI tests. It is clear from the definition that $\cH$ will pass all these CI tests when $\cH\in[\cG]$.

\subsection{Class-I CI Tests Imply $\cH\leq \cG$}\label{sec:class-i-indp-map}

We first show that passing class-I CI tests implies $\cH\leq \cG$, i.e., all CI relations in $\cG$ must hold in $\cH$.

\begin{lemma}\label{lm:class-I-imply-ind-map}
    If $\cH$ passes all class-I CI tests, then $\cH\leq\cG$. In particular, this implies $\skel(\cH)\subseteq \skel(\cG)$.
\end{lemma}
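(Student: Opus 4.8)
The plan is to interpret the hypothesis probabilistically and then invoke the equivalence of the Markov properties for DAGs. Since $\bbP$ is faithful to $\cH$, a query answer $I_\cH(A,B\mid C)=1$ is the same as the conditional independence $X_A\ci X_B\mid X_C$ in $\bbP$, so passing a class-I test for a non-adjacent pair is exactly the assertion that $\bbP$ satisfies the corresponding \emph{local} independence relative to $\cG$. My goal is to upgrade this collection of \emph{pairwise} local independencies into the full directed local Markov property of $\bbP$ with respect to $\cG$, from which the global Markov property w.r.t.\ $\cG$ follows, and then transfer back to d-separation in $\cH$.

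Concretely, I would first fix a topological order $\pi$ of $\cG$ and, for every non-adjacent pair, read the class-I test with the $\pi$-later node as the base node $i$ (a valid instantiation of Definition~\ref{def:class-i}, since the $\pi$-earlier node is then automatically a non-descendant). Passing all class-I tests thus yields, for every node $v$ and every non-adjacent predecessor $u$ of $v$, the relation $v\ci u\mid_\cH\pa_\cG(v)$ (equivalently in $\bbP$). Because any predecessor of $v$ that is adjacent to $v$ must be a parent of $v$, the non-parent predecessors of $v$ are exactly its non-adjacent predecessors. I would then combine these same-conditioning-set statements using the \emph{composition} property — which d-separation, and hence the faithful distribution $\bbP$, always satisfies — to obtain $v\ci \big(\mathrm{pred}_\pi(v)\setminus\pa_\cG(v)\big)\mid_\cH \pa_\cG(v)$ for every $v$. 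This is precisely the ordered (equivalently, local) Markov property of $\bbP$ relative to $\cG$.

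Next I would invoke the classical equivalence, for a distribution and a DAG, among the local Markov property, the recursive factorization $\bbP(x)=\prod_v \bbP(x_v\mid x_{\pa_\cG(v)})$, and the global directed Markov property. Hence $\bbP$ is globally Markov w.r.t.\ $\cG$: whenever $A\ci B\mid_\cG C$, we have $X_A\ci X_B\mid X_C$ in $\bbP$, and by faithfulness of $\bbP$ to $\cH$ this gives $A\ci B\mid_\cH C$. This is exactly $\cH\leq\cG$. For the skeleton claim, for each $i\not\sim j$ in $\cG$ the local Markov property supplies a witnessing set $C$ (e.g.\ $\pa_\cG$ of the $\pi$-later endpoint) with $i\ci j\mid_\cG C$; by $\cH\leq\cG$ we get $i\ci j\mid_\cH C$, so $i,j$ are d-separable and therefore non-adjacent in $\cH$. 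Thus every edge of $\cH$ is an edge of $\cG$, i.e.\ $\skel(\cH)\subseteq\skel(\cG)$.

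The main obstacle is the assembly step: a single class-I query per pair supplies only one orientation of the local relation, and for nodes that are incomparable in $\cG$ an arbitrary base choice need not line up with any single topological order. I resolve this by committing to the $\pi$-later endpoint as base, which makes the tested relations coincide with the ordered Markov conditions for $\pi$, and by using composition (valid for d-separation) to pass from the pairwise relations to the set-valued local Markov property. Everything after that is a citation of the classical Markov-property equivalences together with faithfulness.
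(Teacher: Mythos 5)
Your proof is correct and follows essentially the same route as the paper's: pass from the class-I tests to the (ordered/local) Markov property of $\bbP$ with respect to $\cG$, invoke the classical equivalence with the global Markov property, and use faithfulness of $\bbP$ to $\cH$ to conclude $\cH\leq\cG$ and then the skeleton containment. The only difference is that you make explicit, via the composition property (valid here because the CI relations of $\bbP$ coincide with d-separations in $\cH$), the assembly of the pairwise test outcomes into the set-valued local Markov condition — a step the paper compresses into the single assertion that the tests imply factorization according to $\cG$.
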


\begin{proof}
Passing all class-I CI tests with respect to $\cG$ and $\cH$ implies that the joint distribution $\bbP$ factorizes according to $\cG$, which in turn implies that $\bbP$ satisfies all CI relations given by d-separation in $\cG$. Since $\bbP$ is faithful to $\cH$, it follows that $\cH\leq \cG$. Additionally, if there is an edge $i\sim j$ in $\cH$ but not in $\cG$, assuming $j\not\in\de_\cG(i)$, we obtain $i\ci j\mid_\cG \pa_\cG(i)\setminus\{j\}$ but $i\nci j\mid_\cH \pa_\cG(i)\setminus\{j\}$; a contradiction to $\cH\leq\cG$.
\end{proof}

Note that the consequence of class-I CI tests in Lemma~\ref{lm:class-I-imply-ind-map} essentially follows from the equivalence between the local and global Markov properties~\citep{pearl1988probabilistic}, which establishes that all d-separation statements can be deduced by only a few (local) statements.


\subsection{Class-II CI Tests Imply $\skel(\cH)=\skel(\cG)$}

Suppose that $\cH$ passes all class-I CI tests; we now show that passing all class-II CI tests implies $\skel(\cH)=\skel(\cG)$.

Using Lemma~\ref{lm:class-I-imply-ind-map}, we obtain $\cH\leq\cG$ and $\skel(\cH)\subseteq\skel(\cG)$. If $\skel(\cH)\neq\skel(\cG)$, then there exists a CI relation that holds in $\cH$ but not in $\cG$. In this case, we can construct a ``middle'' DAG that lies between $\cH,\cG$ whose skeleton differs from $\skel(\cG)$ by only one edge. This construction is a direct consequence based on the proof by \citet{chickering2002optimal} of Meek's conjecture \citep{meek1995}. 

\begin{proposition}\label{prop:chickering}
    If $\cH\leq\cG$ and $\skel(\cH)\subsetneq \skel(\cG)$, then there exist a DAG $\cH'$ such that $\cH\leq\cH'\leq\cG$ and $\cH'$ is missing one edge in $\cG'$ for some $\cG'\in [\cG]$.
\end{proposition}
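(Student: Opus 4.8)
The plan is to invoke the resolution of Meek's conjecture due to \citet{chickering2002optimal}, which transforms one DAG into another related by the I-map order using only two elementary operations: covered-edge reversals and single-edge deletions. Concretely, since $\cH\leq\cG$, this result supplies a sequence of DAGs
\[
\cG=\cG_0,\ \cG_1,\ \dots,\ \cG_m=\cH
\]
in which each $\cG_t$ is obtained from $\cG_{t-1}$ by either reversing a covered edge or deleting a single edge, and—crucially—the I-map relation is preserved along the entire path, i.e.\ $\cH\leq\cG_t$ for every $t$. I would take this sequence as the starting point.

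The key step is to locate the \emph{first} edge deletion. Because $\skel(\cH)\subsetneq\skel(\cG)$ strictly and covered-edge reversals leave the skeleton unchanged, at least one step must be a deletion; let step $t$ be the first. Then $\cG_0,\dots,\cG_{t-1}$ are all reached from $\cG$ by covered-edge reversals alone. Since reversing a covered edge preserves both the skeleton and the set of v-structures, each of these graphs shares the same skeleton and v-structures as $\cG$, so $\cG':=\cG_{t-1}\in[\cG]$. Setting $\cH':=\cG_t$, the DAG $\cH'$ is exactly $\cG'$ with one edge removed, which is precisely the ``missing one edge in $\cG'$ for some $\cG'\in[\cG]$'' condition.

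It then remains to verify $\cH\leq\cH'\leq\cG$. The left inequality is immediate from the monotonicity built into the sequence: $\cH\leq\cG_t=\cH'$. For the right inequality, note that deleting an edge while keeping all remaining orientations can only introduce additional d-separations (every path of $\cH'$ is a path of $\cG'$, colliders along it keep their status, and descendant sets only shrink, so any path active in $\cH'$ is active in $\cG'$); hence every d-separation of $\cG'$ also holds in $\cH'$, giving $\cH'\leq\cG'$. Since $\cG'\in[\cG]$, the two are Markov equivalent and share exactly the same d-separations, so every d-separation of $\cG$ is a d-separation of $\cG'$, which in turn holds in $\cH'$. Chaining these yields $\cH'\leq\cG$, completing the argument.

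I expect the only genuine subtlety to be the precise invocation of \citet{chickering2002optimal}: one must use the form of Meek's conjecture that not only connects $\cH$ and $\cG$ by covered reversals and single-edge operations, but also maintains the I-map property $\cH\leq\cG_t$ at \emph{every} intermediate step. This monotonicity is what makes the left inequality free and what guarantees that the prefix consisting solely of covered reversals never leaves $[\cG]$. Everything else—the first-deletion argument, invariance of the MEC under covered reversals, and transitivity of the I-map order—is routine once that theorem is in hand.
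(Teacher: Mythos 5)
Your proposal is correct and follows essentially the same route as the paper: both invoke Chickering's constructive resolution of Meek's conjecture to get a monotone sequence from $\cG$ to $\cH$ built from covered-edge reversals and single-edge deletions, take the first deletion step, and note that the prefix of covered reversals stays within $[\cG]$. The only cosmetic difference is that the paper reads $\cH'\leq\cG$ directly off the monotone chain guaranteed by Chickering's Theorem 4, whereas you re-derive that inequality by hand; both are fine.
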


Then it only remains to find the existence of a CI relation that holds in $\cH'$ (which holds in $\cH$ since $\cH\leq \cH'$) but not in $\cG'$ (or equivalently, $\cG$, since $\cG'\in[\cG]$). This can be detected by class-II CI tests. 
The intuition for this is provided in the following lemma.
\begin{lemma}\label{lm:class-II-intuition}
    For any $i$ in $\cG'\in[\cG]$, there is $\pa_{\cG'}(i)=\pa_{[\cG]}(i)\cup C$ for some undirected clique $C\subseteq\adj_{[\cG]}(i)$.
\end{lemma}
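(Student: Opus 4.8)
The plan is to establish the claim about the parents of a node $i$ in any DAG $\cG' \in [\cG]$ by relating the local structure of $\cG'$ to the essential graph $\cE(\cG)$. The essential graph partitions the edges incident to $i$ into three groups: the directed parent edges $\pa_{[\cG]}(i)$, the directed child edges $\ch_{[\cG]}(i)$, and the undirected edges $\adj_{[\cG]}(i)$. First I would observe that since directed edges of the essential graph are oriented identically in every member of the MEC, we have $\pa_{[\cG]}(i) \subseteq \pa_{\cG'}(i)$ and $\ch_{[\cG]}(i) \subseteq \ch_{\cG'}(i)$, so these contribute the fixed part $\pa_{[\cG]}(i)$ to the parent set and contribute nothing else. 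Every remaining parent of $i$ in $\cG'$ must therefore come from an edge undirected in $\cE(\cG)$, i.e.\ from $\adj_{[\cG]}(i)$; call this set $C := \pa_{\cG'}(i) \cap \adj_{[\cG]}(i)$, so that indeed $\pa_{\cG'}(i) = \pa_{[\cG]}(i) \cup C$ with $C \subseteq \adj_{[\cG]}(i)$.

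The substantive content is then to show that $C$ is an \emph{undirected clique}, i.e.\ a clique within $\adj_{[\cG]}(i)$ in the essential graph. First I would argue $C$ is a clique in the skeleton: take $u, v \in C$, so $u \to i \gets v$ in $\cG'$. If $u \not\sim v$, this would be a v-structure at $i$ in $\cG'$; but then $i$ would be a collider in a v-structure, which is shared by all DAGs in $[\cG]$, forcing the edges $u \to i$ and $v \to i$ to be \emph{directed} in $\cE(\cG)$ — contradicting $u, v \in \adj_{[\cG]}(i)$, which are undirected neighbors. Hence $u \sim v$, so $C$ is a clique. Next I must confirm that all edges among $C$, and between $C$ and $i$, are \emph{undirected} in $\cE(\cG)$; the edges to $i$ are undirected by construction ($C \subseteq \adj_{[\cG]}(i)$), and for the edges within $C$ I would invoke the chordal chain-component structure recalled in Section~\ref{sec:mec}: $i$ together with its undirected neighbors lies in a single chordal chain component, and $C$ sits inside the undirected neighborhood of $i$ within that component, so all edges of $C$ are undirected in $\cE(\cG)$. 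Thus $C$ is genuinely an undirected clique contained in $\adj_{[\cG]}(i)$.

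The step I expect to be the main obstacle is ruling out a subtle failure of the clique/undirectedness property: it is easy to see the edges among $C$ exist as undirected edges, but one must be careful that an edge between two nodes $u,v \in C$ is not secretly \emph{directed} in $\cE(\cG)$ while still undirected-to-$i$. The cleanest way around this is to lean on the fact that $i \cup \adj_{[\cG]}(i)$ lives in one chain component together with the undirected edges among its neighbors, so that the induced undirected subgraph on $\{i\} \cup C$ is exactly the relevant clique in that chordal component; the v-structure argument handles existence of the edges, and the chain-component property handles their undirectedness. I would present the v-structure argument as the core lemma-level step and cite the essential-graph properties from Section~\ref{sec:mec} for the undirectedness, keeping the write-up short since both ingredients are already available.
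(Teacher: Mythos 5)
Your proposal is correct and follows the paper's proof for its main skeleton: the same decomposition $\pa_{\cG'}(i)=\pa_{[\cG]}(i)\cup\bigl(\pa_{\cG'}(i)\cap\adj_{[\cG]}(i)\bigr)$ via the shared directed edges of $\cE(\cG)$, and the same v-structure argument to show that any two members of $C$ are adjacent. Where you genuinely diverge is the last sub-step, ruling out a \emph{directed} edge $u\to v$ between two nodes of $C$. The paper handles this by invoking the orientation result of \citet{wienobst2021polynomial}: it places the maximal undirected clique containing $i\sim v$ most upstream in some DAG of $[\cG]$, forcing $v\to i\to u$ alongside the directed $u\to v$, and derives a directed cycle in an actual member of the MEC. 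You instead appeal to the chain-component structure of the essential graph. Your route works and is more elementary, but as written it leaves the key one-liner implicit: membership of $u,v,i$ in a common chain component does not by itself make the edge $u\sim v$ undirected — what does is the chain-graph property (no partially directed cycles) of $\cE(\cG)$: if $u\to v$ were directed while $u-i$ and $v-i$ are undirected, then $u\to v - i - u$ would be a directed cycle in the sense of Section~\ref{sec:prelim}, contradicting that every essential graph is a chain graph. With that sentence added, your argument is complete and arguably cleaner than the paper's, since it reasons directly on $\cE(\cG)$ rather than constructing a witness DAG; the paper's version, in exchange, only uses facts it has already catalogued in Section~\ref{sec:mec}.
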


An immediate consequence of these two results is the following corollary.

\begin{corollary}\label{cor:class-II-adj}
    If $\cH$ passes all class-I and class-II CI tests, then $\skel(\cH)=\skel(\cG)$.
\end{corollary}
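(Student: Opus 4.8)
The plan is to prove Corollary~\ref{cor:class-II-adj} by combining the two preceding results---Proposition~\ref{prop:chickering} and Lemma~\ref{lm:class-II-intuition}---via a contradiction argument. Suppose $\cH$ passes all class-I and class-II CI tests but $\skel(\cH)\neq\skel(\cG)$. By Lemma~\ref{lm:class-I-imply-ind-map}, passing class-I tests already gives $\cH\leq\cG$ and $\skel(\cH)\subseteq\skel(\cG)$. Combined with the assumption $\skel(\cH)\neq\skel(\cG)$, this forces the strict containment $\skel(\cH)\subsetneq\skel(\cG)$, which is exactly the hypothesis of Proposition~\ref{prop:chickering}.

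Next I would invoke Proposition~\ref{prop:chickering} to obtain a ``middle'' DAG $\cH'$ with $\cH\leq\cH'\leq\cG$, where $\cH'$ is missing exactly one edge relative to some $\cG'\in[\cG]$; call this missing edge $i\sim j$. The goal is to exhibit a single class-II CI test that $\cH$ fails. Because $\cH'$ lacks the edge $i\sim j$ present in $\cG'$, the local Markov property applied to $\cG'$ should yield a d-separation statement of the form $i\ci j\mid_{\cH'}\pa_{\cG'}(i)\setminus\{j\}$ (assuming $j\notin\de_{\cG'}(i)$ without loss of generality, using that $\cG'$ and $\cH'$ differ by only this one edge). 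Since $\cH\leq\cH'$, any d-separation holding in $\cH'$ also holds in $\cH$, so $i\ci j\mid_\cH \pa_{\cG'}(i)\setminus\{j\}$.

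The final step is to recognize that the conditioning set $\pa_{\cG'}(i)\setminus\{j\}$ is precisely the set queried by some class-II CI test. Here Lemma~\ref{lm:class-II-intuition} does the translation: it guarantees $\pa_{\cG'}(i)=\pa_{[\cG]}(i)\cup C$ for some undirected clique $C\subseteq\adj_{[\cG]}(i)$. Hence the conditioning set equals $\big(\pa_{[\cG]}(i)\cup C\big)\setminus\{j\}$, which is exactly the set appearing in Definition~\ref{def:class-ii}. The algorithm performs the class-II test $I_\cH\big(i,j\mid\big(\pa_{[\cG]}(i)\cup C\big)\setminus\{j\}\big)$ and, by the above, it returns independence---so $\cH$ fails this class-II test, contradicting the assumption that $\cH$ passes all of them. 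Therefore $\skel(\cH)=\skel(\cG)$.

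The main obstacle I anticipate is the bookkeeping in the second step: verifying that the missing edge in $\cH'$ genuinely produces a \emph{clean} local d-separation that survives the passage $\cH'\to\cH$, and that the relevant node ($i$ versus $j$) is chosen so that the class-II test in the algorithm's inner loop indeed ranges over the required clique $C$. One must ensure the orientation convention ($j\notin\de_{\cG'}(i)$) is compatible with which endpoint's parent set is used, and that $C\subseteq\adj_{[\cG]}(i)$ is genuinely an undirected clique so the algorithm actually enumerates it---this is where Lemma~\ref{lm:class-II-intuition} is essential, and its correct application (rather than any independence-statement manipulation) is the crux.
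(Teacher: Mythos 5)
Your proposal is correct and follows essentially the same route as the paper's proof: contradiction via Lemma~\ref{lm:class-I-imply-ind-map} to get $\skel(\cH)\subsetneq\skel(\cG)$, Proposition~\ref{prop:chickering} to produce the middle DAG $\cH'$ and $\cG'\in[\cG]$, the local Markov property (applied to $\cH'$, whose parent set $\pa_{\cH'}(i)=\pa_{\cG'}(i)\setminus\{j\}$ yields $i\ci j\mid_{\cH'}\pa_{\cG'}(i)\setminus\{j\}$), Lemma~\ref{lm:class-II-intuition} to identify the conditioning set with $\big(\pa_{[\cG]}(i)\cup C\big)\setminus\{j\}$, and $\cH\leq\cH'$ to transfer the d-separation to $\cH$, contradicting a class-II test. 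The only cosmetic slip is attributing the local Markov step to $\cG'$ rather than $\cH'$, but your surrounding remarks show you mean the right statement.
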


\begin{proof}
    Suppose $\skel(\cH)\neq\skel(\cG)$; then by Lemma~\ref{lm:class-I-imply-ind-map} and Proposition~\ref{prop:chickering}, we know there exists $\cH'$ and $\cG'\in[\cG]$ such that $\cH\leq \cH'\leq \cG$ and $\cH'$ differs from $\cG'$ by one missing edge. Suppose the missing edge is $j\to i$ in $\cG'$, then $i\ci j\mid_{\cH'} \pa_{\cH'}(i)$. Note that $\pa_{\cH'}(i)=\pa_{\cG'}(i)\setminus\{j\}$. By Lemma~\ref{lm:class-II-intuition}, $\pa_{\cG'}(i)=\pa_{[\cG]}(i)\cup C$ for some undirected clique $C\subseteq\adj_{[\cG]}(i)$. Therefore $i\ci j\mid_{\cH'} \pa_{[\cG]}(i)\cup C\setminus\{j\}$. Since $\cH\leq\cH'$, we have $i\ci j\mid_{\cH} \pa_{[\cG]}(i)\cup C\setminus\{j\}$. This means that $\cH$ fails a class-II CI test; a contradiction. 
\end{proof}

We are now ready to prove our main theorem.

\begin{proof}[Proof of Theorem~\ref{thm:upperbound}]
If $\cH$ fails any of the class-I or class-II tests, then $\cH\notin[\cG]$. Thus, suppose that $\cH$ passes all the class-I and class-II tests. Then by Corollary~\ref{cor:class-II-adj}, we obtain $\skel(\cH)=\skel(\cG)$. For any triplets $i\sim k\sim j$ such that $i\not\sim j$ in $\cG$, assume that $j\not\in\de_\cG(i)$ without loss of generality. If $i\sim k\sim j$ is a v-structure in $\cG$, then $k\not\in \pa_\cG(i)$. Consequently, it is a v-structure in $\cH$, since it passes the class-I test of $i\ci j\mid_\cH \pa_\cG(i)\setminus\{j\}$ in Definition~\ref{def:class-i}. Similarly if it is not a v-structure in $\cG$, then it is not a v-structure in $\cH$. Thus $\cH,\cG$ also share the same set of v-structures. Hence it holds that $\cH\in[\cG]$. 

Since the total number of class-I and class-II tests sum up to $\exp(O(s)+O(\log n))$, we arrive at the result.
\end{proof}

\section{DAG ASSOCIAHEDRON}\label{sec:polytope}

In this section, we map our findings onto the DAG associahedron \citep{mohammadi2018generalized}, thereby providing a geometric interpretation of our results. Using edgewalks on the DAG associahedron, we establish in Section~\ref{sec:6.1} that (1) testing is stricly harder than learning, (2) how our testing algorithm can aid learning with potentially fewer CI tests.

We begin by introducing the DAG associahedron. Let $\cA_n$ denote the \emph{permutohedron} on $n$ elements, i.e., the convex polytope in $\bbR^n$ with vertices corresponding to all permutation vectors of size $n$. Two vertices on $\cA_n$ are connected by an edge if and only if their corresponding permutations differ by an adjacent transposition. The \emph{DAG associahedron} $\cA_n(\cH)$ with respect to a DAG $\cH$ is obtained by contracting edges in $\cA_n$ corresponding to d-separations in $\cH$.\footnote{Note that this does not require knowing $\cH$ but only its d-separations.} Namely, the edge between vertices $(\pi_1,\dots,\pi_i,\pi_{i+1},\dots,\pi_n)$ and $(\pi_1,\dots,\pi_{i+1},\pi_{i},\dots,\pi_n)$ is contracted when $\pi_i\ci\pi_{i+1}\mid_\cH \{\pi_1,\dots,\pi_{i-1}\}$. It was shown that (1) $\cA_n(\cH)$ remains a convex polytope (section 4 in \citet{mohammadi2018generalized}); (2) the vertices of $\cA_n(\cH)$ correspond to minimal independence maps of $\cH$ that can be obtained via any of the associated permutations before contraction (Theorem 7.1 in \citet{mohammadi2018generalized}). Fig.~\ref{fig:4} illustrates these concepts.

\begin{figure}[h]
\centering
\includegraphics[width=0.44\textwidth]{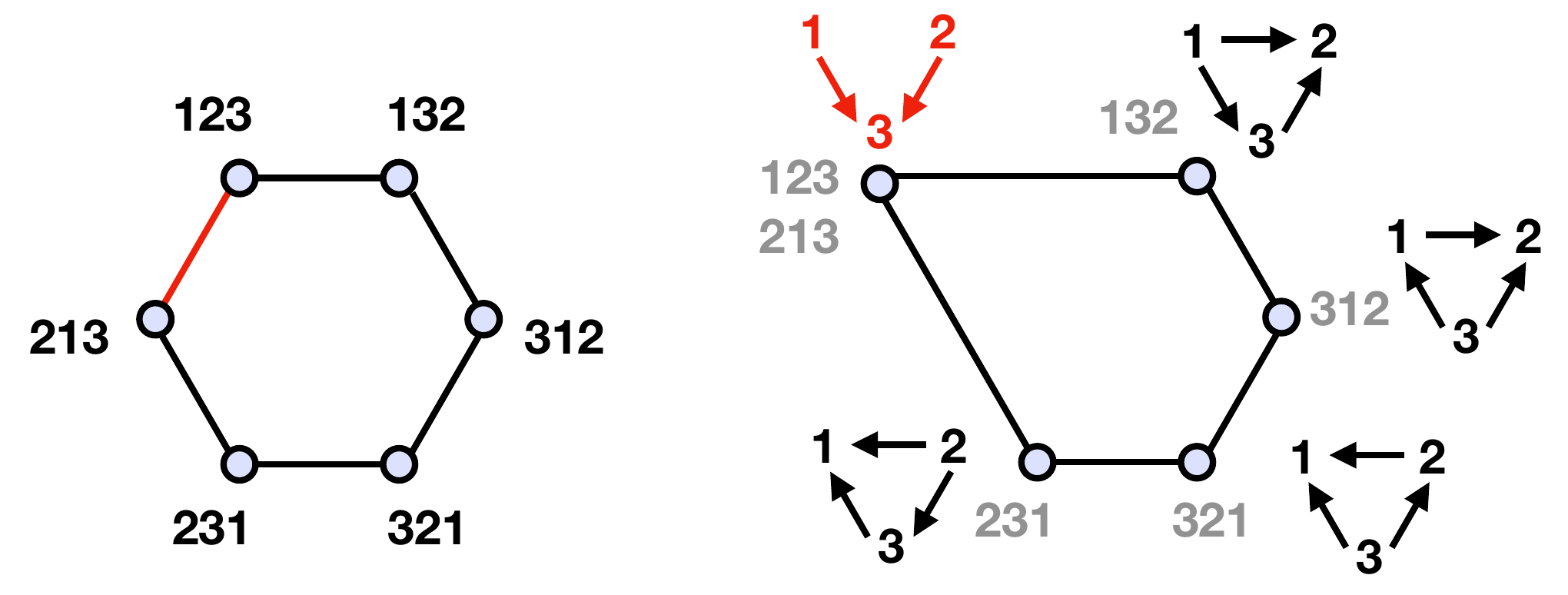}
\caption{\textbf{(Left).} Permutohedron $\cA_3$. \textbf{(Right).} DAG associahedron $\cA_3(1\to 3\from 2)$. The corresponding DAG and contracted edge are in red.}\label{fig:4}
\end{figure}

We can now reinterpret our results using $\cA_n(\cH)$.

\paragraph{Testing if $\cG$ is on the polytope} Note that as $\cH$ and any DAGs in $[\cH]$ are minimal independence maps of $\cH$, property (2) in the above paragraphs indicates that $\cH\in[\cG]$ only if $\cG$ is a vertex of $\cA_n(\cH)$. Therefore the first test is to see if $\cG$ is on $\cA_n(\cH)$.
By Lemma~\ref{lm:class-I-imply-ind-map}, we can test if $\cG$ is an independence map of $\cH$ by class-I CI tests. To further test if $\cG$ is a minimal independence map of $\cH$, one only needs to perform $O(n^2)$ tests to see if any edge in $\cG$ is removable. Therefore this gives us a way to rule out the case when $\cG$ is not on $\cA_n(\cH)$.

\paragraph{Testing if $\cG$ is a sparsest DAG} Once we establish that $\cG$ is on $\cA_n(\cH)$, we can test if $\skel(\cG)=\skel(\cH)$ by testing if $\cG$ is a sparsest DAG on $\cA_n(\cH)$ (since no minimal independence map of $\cH$ can be sparser than $\skel(\cH)$). 
If $\cG$ is not sparsest, then it was shown that one can find a strictly sparser DAG $\cH_1$ such that $\cH\leq\cH_1\leq\cG$ by a specific sequence of edgewalks from $\cG$ on $\cA_n(\cH)$ (Proposition 8(b) in \citet{solus2021consistency}). On the contrary, if no such edgewalks exists, then one can conclude that $\cG$ is sparsest. Concretely, each edgewalk corresponds to flipping a covered edge in the starting DAG, obtaining a topological order of the flipped DAG, and finding the minimal independence map of this topological order via $O(n^2)$ CI tests. 

Note that this existence result does not imply any non-trivial upper bounds on the number of edgewalks required before finding a sparser DAG. One way to see this is by considering the neighborhood of a sparsest $\cH$. Since all DAGs in $[\cH]$ are on $\cA_n(\cH)$ and one can traverse from one to another via a sequence of covered edge flips \citep{chickering1995transformational}, they are all connected on $\cA_n(\cH)$ via the aforementioned edgewalks. Therefore a trivial computation following the above strategy sums up to $O(n^2)\cdot|[\cH]|$ CI tests for verifying that $\cH$ is sparsest. Here, $|[\cH]|$ is the number of DAGs in the MEC, which can be $\exp(\Omega(n))$ regardless of the maximal undirected clique size $s$ (see Appendix~\textcolor{red}{4}).\footnote{This excludes the trivial case where $s=1$.}

In comparison, our results in Proposition~\ref{prop:chickering} and Lemma~\ref{lm:class-II-intuition} establish that we can skip a lot of edgewalks by directly performing class-II CI tests, which entails no more than $\exp(O(s)+O(\log n))$ CI tests. Upon testing if $\skel(\cH)=\skel(\cG)$, it is easy to test if $\cH\in[\cG]$ following our proof of Theorem~\ref{thm:upperbound}.

\subsection{Learning vs. Testing}\label{sec:6.1}

By the discussion above, the problem of learning $[\cH]$ can be seen as identifying a sparsest vertex of the DAG associahedron $\cA_n(\cH)$, whereas the problem of testing if $[\cG]=[\cH]$ corresponds to verifying if $\cG$ is a sparsest vertex. In this regard, it is evident that testing is strictly easier than learning, unless one stumbled upon the correct $[\cH]$ at the initial round of learning.

Note that when $\cG$ is not sparsest, the existence results discussed above establish that one can walk along the edges of $\cA_n(\cH)$ to a strictly sparser DAG. Therefore one can build a greedy search algorithm over $\cA_n(\cH)$ to learn $[\cH]$. GSP \citep{solus2021consistency} does this by searching for the sparsest permutations; \citet{lam2022greedy} and \citet{andrews2023fast} showed that certain edgewalks can be skipped by considering traversals of permutations that are different from GSP. In comparison, our results indicate that instead of edgewalks on the DAG associahedron, one can directly arrive at a strictly sparser DAG via class-II CI tests.\footnote{Consider the minimal independence map obtained by a topological order of $\cH'$ in Proposition~\ref{prop:chickering}.} When the starting DAG belongs to an MEC of large size but has very small undirected cliques, these tests can be much more efficient than edgewalks. Thus in these cases, adopting our strategy may aid learning with potentially fewer CI tests.
\section{DISCUSSION}\label{sec:discuss}

In this work, we introduced the testing problem of causal discovery. We established matching lower and upper bounds on the number of conditional independence tests required to determine if a hidden causal graph, which can be queried using conditional independence tests, belongs to a specified Markov equivalence class. There are several interesting future directions stemming from this work. These include deriving bounds that generalize our results to cyclic graphs. While our work focused on testing if a hidden causal graph belongs to a specified MEC using observational data, it would also be of interest to explore extensions of testing in the presence of interventional data.

Furthermore, it would also be valuable to establish sample complexity bounds and statistical evaluations for the testing problem. As our results provide a binary answer, it will be relevant  for real-world scenarios to establish non-binary scores, e.g., measuring how well the given MEC represents the hidden DAG. Another problem which aligns naturally with traditional property testing literature is the approximate testing problem: testing if the hidden graph is in the given MEC or $\epsilon$-far-away from it. There could be many different ways for defining $\epsilon$-far-away distances (such as SHD \citep{acid2003searching,tsamardinos2006max} and SID\citep{peters2015structural}) that are of interest.

Finally, it would be interesting to explore the implications of our results for causal structure learning. This would be particularly relevant in the context of algorithms that perform greedy search either in the space of permutations (such as GSP \citep{solus2021consistency} and extensions thereof \citep{lam2022greedy,andrews2023fast}) or in the space of MECs (such as GES \citep{chickering2002optimal}).




\subsection*{Acknowledgements}
We thank the anonymous reviewers for helpful comments. J.Z.~was partially supported by an Apple AI/ML PhD Fellowship.
K.S.~ was supported by a fellowship from the Eric and Wendy Schmidt Center at the Broad Institute. The authors were partially supported by NCCIH/NIH (1DP2AT012345), ONR (N00014-22-1-2116), the United States Department of Energy (DOE), Office of Advanced Scientific Computing Research (ASCR), via the M2dt MMICC center (DE-SC0023187), the MIT-IBM Watson AI Lab, and a Simons Investigator Award to C.U.

\bibliographystyle{apalike}
\bibliography{ref}

\begin{thebibliography}{}

\bibitem[Acid and de~Campos, 2003]{acid2003searching}
Acid, S. and de~Campos, L.~M. (2003).
\newblock Searching for bayesian network structures in the space of restricted
  acyclic partially directed graphs.
\newblock {\em Journal of Artificial Intelligence Research}, 18:445--490.

\bibitem[Alonso-Barba et~al., 2013]{alonso2013scaling}
Alonso-Barba, J.~I., G{\'a}mez, J.~A., Puerta, J.~M., et~al. (2013).
\newblock Scaling up the greedy equivalence search algorithm by constraining
  the search space of equivalence classes.
\newblock {\em International journal of approximate reasoning}, 54(4):429--451.

\bibitem[Andersson et~al., 1997]{andersson1997characterization}
Andersson, S.~A., Madigan, D., and Perlman, M.~D. (1997).
\newblock A characterization of markov equivalence classes for acyclic
  digraphs.
\newblock {\em The Annals of Statistics}, 25(2):505--541.

\bibitem[Andrews et~al., 2023]{andrews2023fast}
Andrews, B., Ramsey, J., Sanchez~Romero, R., Camchong, J., and Kummerfeld, E.
  (2023).
\newblock Fast scalable and accurate discovery of dags using the best order
  score search and grow shrink trees.
\newblock {\em Advances in Neural Information Processing Systems}, 36.

\bibitem[Batu et~al., 2001]{batu2001testing}
Batu, T., Fischer, E., Fortnow, L., Kumar, R., Rubinfeld, R., and White, P.
  (2001).
\newblock Testing random variables for independence and identity.
\newblock In {\em Proceedings 42nd IEEE Symposium on Foundations of Computer
  Science}, pages 442--451. IEEE.

\bibitem[Batu et~al., 2000]{batu2000testing}
Batu, T., Fortnow, L., Rubinfeld, R., Smith, W.~D., and White, P. (2000).
\newblock Testing that distributions are close.
\newblock In {\em Proceedings 41st Annual Symposium on Foundations of Computer
  Science}, pages 259--269. IEEE.

\bibitem[Batu et~al., 2004]{batu2004sublinear}
Batu, T., Kumar, R., and Rubinfeld, R. (2004).
\newblock Sublinear algorithms for testing monotone and unimodal distributions.
\newblock In {\em Proceedings of the thirty-sixth annual ACM symposium on
  Theory of computing}, pages 381--390.

\bibitem[Brenner and Sontag, 2013]{brenner2013sparsityboost}
Brenner, E. and Sontag, D. (2013).
\newblock Sparsityboost: A new scoring function for learning bayesian network
  structure.
\newblock {\em arXiv preprint arXiv:1309.6820}.

\bibitem[{Bu} et~al., 2016]{BZLV16}
{Bu}, Y., {Zou}, S., {Liang}, Y., and {Veeravalli}, V.~V. (2016).
\newblock Estimation of kl divergence between large-alphabet distributions.
\newblock In {\em 2016 IEEE International Symposium on Information Theory
  (ISIT)}, pages 1118--1122.

\bibitem[Canonne, 2020]{canonne2020survey}
Canonne, C.~L. (2020).
\newblock A survey on distribution testing: Your data is big. but is it blue?
\newblock {\em Theory of Computing}, pages 1--100.

\bibitem[Chan et~al., 2014]{chan2014optimal}
Chan, S.-O., Diakonikolas, I., Valiant, P., and Valiant, G. (2014).
\newblock Optimal algorithms for testing closeness of discrete distributions.
\newblock In {\em Proceedings of the twenty-fifth annual ACM-SIAM symposium on
  Discrete algorithms}, pages 1193--1203. SIAM.

\bibitem[Chickering, 1995]{chickering1995transformational}
Chickering, D.~M. (1995).
\newblock A transformational characterization of equivalent bayesian network
  structures.
\newblock In {\em Proceedings of the Eleventh conference on Uncertainty in
  artificial intelligence}, pages 87--98.

\bibitem[Chickering, 2002]{chickering2002optimal}
Chickering, D.~M. (2002).
\newblock Optimal structure identification with greedy search.
\newblock {\em Journal of machine learning research}, 3(Nov):507--554.

\bibitem[Chickering et~al., 2004]{chickering2004large}
Chickering, M., Heckerman, D., and Meek, C. (2004).
\newblock Large-sample learning of bayesian networks is np-hard.
\newblock {\em Journal of Machine Learning Research}, 5:1287--1330.

\bibitem[Cho et~al., 2016]{cho2016reconstructing}
Cho, H., Berger, B., and Peng, J. (2016).
\newblock {Reconstructing Causal Biological Networks through Active Learning}.
\newblock {\em {PLoS ONE}}, 11(3):e0150611.

\bibitem[Choo et~al., 2023]{choo2023active}
Choo, D., Gouleakis, T., and Bhattacharyya, A. (2023).
\newblock Active causal structure learning with advice.
\newblock In {\em International Conference on Machine Learning}, pages
  5838--5867. PMLR.

\bibitem[Claassen et~al., 2013]{claassen2013learning}
Claassen, T., Mooij, J., and Heskes, T. (2013).
\newblock Learning sparse causal models is not np-hard.
\newblock {\em arXiv preprint arXiv:1309.6824}.

\bibitem[Colombo et~al., 2011]{colombo2011learning}
Colombo, D., Maathuis, M.~H., Kalisch, M., and Richardson, T.~S. (2011).
\newblock Learning high-dimensional dags with latent and selection variables.
\newblock In {\em Proceedings of the Twenty-Seventh Conference on Uncertainty
  in Artificial Intelligence}, pages 850--850.

\bibitem[Dawid, 1979]{dawid1979conditional}
Dawid, A.~P. (1979).
\newblock Conditional independence in statistical theory.
\newblock {\em Journal of the Royal Statistical Society Series B: Statistical
  Methodology}, 41(1):1--15.

\bibitem[De~Campos and Ji, 2011]{de2011efficient}
De~Campos, C.~P. and Ji, Q. (2011).
\newblock {Efficient Structure Learning of Bayesian Networks using
  Constraints}.
\newblock {\em The Journal of Machine Learning Research}, 12:663--689.

\bibitem[de~Campos et~al., 2019]{de2019combining}
de~Campos, L.~M., Cano, A., Castellano, J.~G., and Moral, S. (2019).
\newblock {Combining gene expression data and prior knowledge for inferring
  gene regulatory networks via Bayesian networks using structural
  restrictions}.
\newblock {\em Statistical Applications in Genetics and Molecular Biology},
  18(3).

\bibitem[Dirac, 1961]{dirac1961rigid}
Dirac, G.~A. (1961).
\newblock On rigid circuit graphs.
\newblock In {\em Abhandlungen aus dem Mathematischen Seminar der
  Universit{\"a}t Hamburg}, volume~25, pages 71--76. Springer.

\bibitem[Eberhardt and Scheines, 2007]{eberhardt2007interventions}
Eberhardt, F. and Scheines, R. (2007).
\newblock {Interventions and Causal Inference}.
\newblock {\em Philosophy of science}, 74(5):981--995.

\bibitem[Fisher et~al., 1943]{fisher1943relation}
Fisher, R.~A., Corbet, A.~S., and Williams, C.~B. (1943).
\newblock The relation between the number of species and the number of
  individuals in a random sample of an animal population.
\newblock {\em The Journal of Animal Ecology}, pages 42--58.

\bibitem[Flores et~al., 2011]{flores2011incorporating}
Flores, M.~J., Nicholson, A.~E., Brunskill, A., Korb, K.~B., and Mascaro, S.
  (2011).
\newblock {Incorporating expert knowledge when learning Bayesian network
  structure: A medical case study}.
\newblock {\em Artificial intelligence in medicine}, 53(3):181--204.

\bibitem[Friedman et~al., 2000]{friedman2000using}
Friedman, N., Linial, M., Nachman, I., and Pe'er, D. (2000).
\newblock Using bayesian networks to analyze expression data.
\newblock {\em Journal of computational biology}, 7(3-4):601--620.

\bibitem[Geiger and Heckerman, 2002]{geiger2002parameter}
Geiger, D. and Heckerman, D. (2002).
\newblock Parameter priors for directed acyclic graphical models and the
  characterization of several probability distributions.
\newblock {\em The Annals of Statistics}, 30(5):1412--1440.

\bibitem[Geiger and Pearl, 1990]{geiger1990logic}
Geiger, D. and Pearl, J. (1990).
\newblock On the logic of causal models.
\newblock In {\em Machine Intelligence and Pattern Recognition}, volume~9,
  pages 3--14. Elsevier.

\bibitem[Goldreich et~al., 1998]{goldreich1998property}
Goldreich, O., Goldwasser, S., and Ron, D. (1998).
\newblock Property testing and its connection to learning and approximation.
\newblock {\em Journal of the ACM (JACM)}, 45(4):653--750.

\bibitem[Good, 1953]{good1953population}
Good, I.~J. (1953).
\newblock The population frequencies of species and the estimation of
  population parameters.
\newblock {\em Biometrika}, 40(3-4):237--264.

\bibitem[Hoover, 1990]{hoover1990logic}
Hoover, K.~D. (1990).
\newblock {The logic of causal inference: Econometrics and the Conditional
  Analysis of Causation}.
\newblock {\em Economics \& Philosophy}, 6(2):207--234.

\bibitem[Indyk et~al., 2012]{indyk2012approximating}
Indyk, P., Levi, R., and Rubinfeld, R. (2012).
\newblock Approximating and testing k-histogram distributions in sub-linear
  time.
\newblock In {\em Proceedings of the 31st ACM SIGMOD-SIGACT-SIGAI symposium on
  Principles of Database Systems}, pages 15--22.

\bibitem[Jiao et~al., 2015]{JVHW15}
Jiao, J., Venkat, K., Han, Y., and Weissman, T. (2015).
\newblock Minimax estimation of functionals of discrete distributions.
\newblock {\em IEEE Transactions on Information Theory}, 61(5):2835--2885.

\bibitem[Kalisch and B{\"u}hlman, 2007]{kalisch2007estimating}
Kalisch, M. and B{\"u}hlman, P. (2007).
\newblock Estimating high-dimensional directed acyclic graphs with the
  pc-algorithm.
\newblock {\em Journal of Machine Learning Research}, 8(3).

\bibitem[King et~al., 2004]{king2004functional}
King, R.~D., Whelan, K.~E., Jones, F.~M., Reiser, P. G.~K., Bryant, C.~H.,
  Muggleton, S.~H., Kell, D.~B., and Oliver, S.~G. (2004).
\newblock {Functional genomic hypothesis generation and experimentation by a
  robot scientist}.
\newblock {\em Nature}, 427(6971):247--252.

\bibitem[Lam et~al., 2022]{lam2022greedy}
Lam, W.-Y., Andrews, B., and Ramsey, J. (2022).
\newblock Greedy relaxations of the sparsest permutation algorithm.
\newblock In {\em Uncertainty in Artificial Intelligence}, pages 1052--1062.
  PMLR.

\bibitem[Lauritzen, 1996]{lauritzen1996graphical}
Lauritzen, S.~L. (1996).
\newblock {\em Graphical models}, volume~17.
\newblock Clarendon Press.

\bibitem[Li and Beek, 2018]{li2018bayesian}
Li, A. and Beek, P. (2018).
\newblock {Bayesian Network Structure Learning with Side Constraints}.
\newblock In {\em International conference on probabilistic graphical models},
  pages 225--236. PMLR.

\bibitem[Long et~al., 2023]{long2023causal}
Long, S., Pich{\'e}, A., Zantedeschi, V., Schuster, T., and Drouin, A. (2023).
\newblock Causal discovery with language models as imperfect experts.
\newblock {\em arXiv preprint arXiv:2307.02390}.

\bibitem[McAllester and Schapire, 2000]{mcallester2000convergence}
McAllester, D.~A. and Schapire, R.~E. (2000).
\newblock On the convergence rate of good-turing estimators.
\newblock In {\em COLT}, pages 1--6.

\bibitem[Meek, 1995]{meek1995}
Meek, C. (1995).
\newblock {Causal Inference and Causal Explanation with Background Knowledge}.
\newblock In {\em Proceedings of the Eleventh Conference on Uncertainty in
  Artificial Intelligence}, UAI'95, page 403–410, San Francisco, CA, USA.
  Morgan Kaufmann Publishers Inc.

\bibitem[Meek, 1997]{meek1997graphical}
Meek, C. (1997).
\newblock {\em Graphical Models: Selecting causal and statistical models}.
\newblock PhD thesis, Carnegie Mellon University.

\bibitem[Mohammadi et~al., 2018]{mohammadi2018generalized}
Mohammadi, F., Uhler, C., Wang, C., and Yu, J. (2018).
\newblock Generalized permutohedra from probabilistic graphical models.
\newblock {\em SIAM Journal on Discrete Mathematics}, 32(1):64--93.

\bibitem[Nandy et~al., 2018]{nandy2018high}
Nandy, P., Hauser, A., and Maathuis, M.~H. (2018).
\newblock High-dimensional consistency in score-based and hybrid structure
  learning.
\newblock {\em The Annals of Statistics}, 46(6A):3151--3183.

\bibitem[Orlitsky et~al., 2003]{orlitsky2003always}
Orlitsky, A., Santhanam, N.~P., and Zhang, J. (2003).
\newblock Always good turing: Asymptotically optimal probability estimation.
\newblock {\em Science}, 302(5644):427--431.

\bibitem[Orlitsky et~al., 2016]{OSW16}
Orlitsky, A., Suresh, A.~T., and Wu, Y. (2016).
\newblock Optimal prediction of the number of unseen species.
\newblock {\em Proceedings of the National Academy of Sciences},
  113(47):13283--13288.

\bibitem[Paninski, 2008]{paninski2008coincidence}
Paninski, L. (2008).
\newblock A coincidence-based test for uniformity given very sparsely sampled
  discrete data.
\newblock {\em IEEE Transactions on Information Theory}, 54(10):4750--4755.

\bibitem[Pearl, 1988]{pearl1988probabilistic}
Pearl, J. (1988).
\newblock {\em Probabilistic reasoning in intelligent systems: networks of
  plausible inference}.
\newblock Morgan kaufmann.

\bibitem[Pearl, 2003]{pearl2003causality}
Pearl, J. (2003).
\newblock Causality: models, reasoning, and inference.
\newblock {\em Econometric Theory}, 19(4):675--685.

\bibitem[Pearl, 2009]{pearl2009causal}
Pearl, J. (2009).
\newblock Causal inference in statistics: An overview.
\newblock {\em Statistics Surveys}, 3:96.

\bibitem[Peters and B{\"u}hlmann, 2015]{peters2015structural}
Peters, J. and B{\"u}hlmann, P. (2015).
\newblock Structural intervention distance for evaluating causal graphs.
\newblock {\em Neural computation}, 27(3):771--799.

\bibitem[Pingault et~al., 2018]{pingault2018using}
Pingault, J.-B., O'reilly, P.~F., Schoeler, T., Ploubidis, G.~B., Rijsdijk, F.,
  and Dudbridge, F. (2018).
\newblock {Using genetic data to strengthen causal inference in observational
  research}.
\newblock {\em Nature Reviews Genetics}, 19(9):566--580.

\bibitem[Reichenbach, 1956]{reichenbach1956direction}
Reichenbach, H. (1956).
\newblock {\em {The Direction of Time}}, volume~65.
\newblock University of California Press.

\bibitem[Robins et~al., 2000]{robins2000marginal}
Robins, J.~M., Hernan, M.~A., and Brumback, B. (2000).
\newblock Marginal structural models and causal inference in epidemiology.
\newblock {\em Epidemiology}, pages 550--560.

\bibitem[Rotmensch et~al., 2017]{rotmensch2017learning}
Rotmensch, M., Halpern, Y., Tlimat, A., Horng, S., and Sontag, D. (2017).
\newblock {Learning a Health Knowledge Graph from Electronic Medical Records}.
\newblock {\em Scientific reports}, 7(1):1--11.

\bibitem[Rubinfeld and Sudan, 1996]{rubinfeld1996robust}
Rubinfeld, R. and Sudan, M. (1996).
\newblock Robust characterizations of polynomials with applications to program
  testing.
\newblock {\em SIAM Journal on Computing}, 25(2):252--271.

\bibitem[Scheines et~al., 1998]{scheines1998tetrad}
Scheines, R., Spirtes, P., Glymour, C., Meek, C., and Richardson, T. (1998).
\newblock {he TETAD Project: Constraint Based Aids to Causal Model
  Specification}.
\newblock {\em Multivariate Behavioral Research}, 33(1):65--117.

\bibitem[Schulte et~al., 2010]{schulte2010imap}
Schulte, O., Frigo, G., Greiner, R., and Khosravi, H. (2010).
\newblock The imap hybrid method for learning gaussian bayes nets.
\newblock In {\em Advances in Artificial Intelligence: 23rd Canadian Conference
  on Artificial Intelligence, Canadian AI 2010, Ottawa, Canada, May 31--June 2,
  2010. Proceedings 23}, pages 123--134. Springer.

\bibitem[Solus et~al., 2021]{solus2021consistency}
Solus, L., Wang, Y., and Uhler, C. (2021).
\newblock Consistency guarantees for greedy permutation-based causal inference
  algorithms.
\newblock {\em Biometrika}, 108(4):795--814.

\bibitem[Spirtes et~al., 1989]{spirtes1989causality}
Spirtes, P., Glymour, C., and Scheines, R. (1989).
\newblock Causality from probability.

\bibitem[Spirtes et~al., 2000]{spirtes2000causation}
Spirtes, P., Glymour, C.~N., and Scheines, R. (2000).
\newblock {\em Causation, prediction, and search}.
\newblock MIT press.

\bibitem[Spirtes et~al., 2013]{spirtes2013causal}
Spirtes, P.~L., Meek, C., and Richardson, T.~S. (2013).
\newblock Causal inference in the presence of latent variables and selection
  bias.
\newblock {\em arXiv preprint arXiv:1302.4983}.

\bibitem[Sverchkov and Craven, 2017]{sverchkov2017review}
Sverchkov, Y. and Craven, M. (2017).
\newblock {A review of active learning approaches to experimental design for
  uncovering biological networks}.
\newblock {\em PLoS computational biology}, 13(6):e1005466.

\bibitem[Tian, 2016]{tian2016bayesian}
Tian, T. (2016).
\newblock {Bayesian Computation Methods for Inferring Regulatory Network Models
  Using Biomedical Data}.
\newblock {\em Translational Biomedical Informatics: A Precision Medicine
  Perspective}, pages 289--307.

\bibitem[Tsamardinos et~al., 2006]{tsamardinos2006max}
Tsamardinos, I., Brown, L.~E., and Aliferis, C.~F. (2006).
\newblock The max-min hill-climbing bayesian network structure learning
  algorithm.
\newblock {\em Machine learning}, 65:31--78.

\bibitem[Valiant and Valiant, 2011]{VV11a}
Valiant, G. and Valiant, P. (2011).
\newblock Estimating the unseen: An n/log(n)-sample estimator for entropy and
  support size, shown optimal via new clts.
\newblock In {\em Proceedings of the Forty-third Annual ACM Symposium on Theory
  of Computing}, STOC '11, pages 685--694, New York, NY, USA. ACM.

\bibitem[Valiant and Valiant, 2017]{valiant2017automatic}
Valiant, G. and Valiant, P. (2017).
\newblock An automatic inequality prover and instance optimal identity testing.
\newblock {\em SIAM Journal on Computing}, 46(1):429--455.

\bibitem[Vashishtha et~al., 2023]{vashishtha2023causal}
Vashishtha, A., Reddy, A.~G., Kumar, A., Bachu, S., Balasubramanian, V.~N., and
  Sharma, A. (2023).
\newblock Causal inference using llm-guided discovery.
\newblock {\em arXiv preprint arXiv:2310.15117}.

\bibitem[Verma and Pearl, 1990]{verma1990equivalence}
Verma, T. and Pearl, J. (1990).
\newblock Equivalence and synthesis of causal models.
\newblock In {\em Proceedings of the Sixth Annual Conference on Uncertainty in
  Artificial Intelligence}, pages 255--270.

\bibitem[Wien{\"o}bst et~al., 2021]{wienobst2021polynomial}
Wien{\"o}bst, M., Bannach, M., and Liskiewicz, M. (2021).
\newblock Polynomial-time algorithms for counting and sampling markov
  equivalent dags.
\newblock In {\em Proceedings of the AAAI Conference on Artificial
  Intelligence}, volume~35, pages 12198--12206.

\bibitem[Woodward, 2005]{woodward2005making}
Woodward, J. (2005).
\newblock {\em {Making Things Happen: A Theory of Causal Explanation}}.
\newblock Oxford University Press.

\bibitem[{Wu} and {Yang}, 2015]{WY15}
{Wu}, Y. and {Yang}, P. (2015).
\newblock {Chebyshev polynomials, moment matching, and optimal estimation of
  the unseen}.
\newblock {\em ArXiv e-prints, arXiv:1504.01227}.

\end{thebibliography}


\newpage
\appendix
\section{PRELIMINARIES}

We remark here that we do not assume causal sufficiency, since the testing problem assumes that the joint distribution is Markov and faithful to some DAG $\cH$ (which does not necessarily imply causal sufficiency) and asks if this DAG is contained in a given MEC $[\cG]$. This is in contrast to the learning problem, where one cares about the complete causal explanation \citep{meek1997graphical} and therefore needs to assume e.g., no unobserved causal variables or cycles.

Since all the DAGs in the same Markov equivalence class share the same v-structures, we know that these v-structures are directed in $\cE(\cG)$. In addition, we can orient an additional set of edges using a set of logical relations known as Meek rules \citep{meek1995}.

\begin{proposition}[Meek Rules \citep{meek1995}]\label{prop:1}
We can infer all directed edges in $\cE(\cG)$ using the following four rules:
\begin{enumerate}
    \item If $i\rightarrow j \sim k$ and $i\not\sim k$, then $j\rightarrow k$.
    \item If $i\rightarrow j \rightarrow k$ and $i\sim k$, then $i\rightarrow k$.
    \item If $i\sim j, i\sim k, i\sim l, j\rightarrow k, l\rightarrow k$ and $j\not\sim l$, then $i\rightarrow k$.
    \item If $i\sim j, i\sim k, i\sim l, j\leftarrow k, l\rightarrow k$ and $j\not\sim l$, then $i\rightarrow j$.
\end{enumerate}
\end{proposition}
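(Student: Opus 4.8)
The plan is to prove the claim in two halves: \emph{soundness}, that every edge oriented by an application of one of the four rules points the same way in \emph{every} DAG of $[\cG]$, and \emph{completeness}, that closing the rules (starting from $\skel(\cG)$ with all v-structures oriented) leaves undirected exactly those edges that are genuinely undirected in $\cE(\cG)$. Both halves rest on two invariants common to all members of a Markov equivalence class: the graphs are acyclic, and they share the same skeleton and v-structures. Hence no legal orientation of a remaining edge may close a directed cycle or create a v-structure not already present.

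For soundness I would argue rule by rule, assuming the premises are already correctly oriented and showing the \emph{opposite} orientation of the conclusion edge violates an invariant. For Rule~1, orienting $k\to j$ on top of $i\to j$ with $i\not\sim k$ yields the forbidden v-structure $i\to j\from k$; hence $j\to k$. For Rule~2, orienting $k\to i$ together with $i\to j\to k$ closes the cycle $i\to j\to k\to i$; hence $i\to k$. Rules~3 and~4 combine both invariants: in Rule~3, assuming $k\to i$ forces $j\to i$ and $l\to i$ (otherwise a cycle through $j\to k$ or $l\to k$ appears), producing the forbidden v-structure $j\to i\from l$ since $j\not\sim l$; in Rule~4, assuming $j\to i$ first forces $k\to i$ (to avoid $k\to j\to i\to k$), after which the edge $i\sim l$ has no admissible orientation—$i\to l$ closes $l\to k\to i\to l$, while $l\to i$ creates the v-structure $j\to i\from l$. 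Each case contradicts an invariant, so the stated conclusion is compelled.

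The completeness direction is the main obstacle. Let $\cG^\ast$ be the partially directed graph obtained by orienting the v-structures in $\skel(\cG)$ and closing under the four rules. Soundness already shows every directed edge of $\cG^\ast$ agrees with $\cE(\cG)$, so it remains to prove that every edge left undirected in $\cG^\ast$ is genuinely undirected in $\cE(\cG)$, i.e.\ admits both orientations among DAGs of $[\cG]$. The combinatorial heart is a structural lemma about $\cG^\ast$ \emph{itself} (not assumed a priori equal to $\cE(\cG)$): rule-closure forces $\cG^\ast$ to be a chain graph whose undirected components are chordal and which contains no v-structure beyond those already oriented. I would prove this by contradiction—if some undirected component contained a chordless cycle of length $\geq 4$, or if orienting an undirected edge were forced (to avoid a cycle or a new v-structure), then the relevant pattern would match the premises of one of Rules~1--4, so that rule would have fired, contradicting closure. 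This is exactly the delicate step, so I expect it to demand the most care.

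Granting the structural lemma, the remaining step is constructive. For any undirected edge $a-b$ of $\cG^\ast$, I would exhibit two DAGs of $[\cG]$ differing only in the direction of $a-b$. Since the undirected component containing $a-b$ is chordal, it has a perfect elimination ordering; orienting its edges consistently with such an ordering (and with the incident directed edges) gives an acyclic completion, and placing the clique containing $a-b$ most upstream lets me orient that edge either way—this is the property of chordal chain components recalled in Section~\ref{sec:mec} from \citet{andersson1997characterization} and \citet{wienobst2021polynomial}. The ``no new v-structure'' obligation is automatic here: a most-upstream clique sends all its outgoing edges into the rest of the component, and two parents of an outside node that both lie in the clique are themselves adjacent, so reorienting within the clique can never turn a non-collider into a collider. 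Extending to the other components by any topological order consistent with the directed edges then yields two DAGs sharing $\skel(\cG)$ and the v-structures, hence both in $[\cG]$. This certifies $a-b$ as undirected in $\cE(\cG)$ and therefore $\cG^\ast=\cE(\cG)$, completing the argument.
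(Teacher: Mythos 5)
The paper does not actually prove this proposition: it is quoted as a known result from \citet{meek1995} (with completeness also following from the characterization of \citet{andersson1997characterization}), so there is no in-paper proof to compare against. Your soundness half is correct and is the standard argument: each rule's conclusion is forced because the reverse orientation would either close a directed cycle or create a v-structure that is not a v-structure of $\cG$ (hence cannot appear in any DAG of $[\cG]$), and your case analyses for Rules~3 and~4 are handled properly.

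The completeness half, however, has a genuine gap, and it sits exactly where you flagged it. Your structural lemma --- that the rule-closure $\cG^\ast$ is a chain graph with chordal undirected components admitting no further forced orientation --- \emph{is} the content of Meek's theorem, and the one-sentence justification you offer (``the relevant pattern would match the premises of one of Rules 1--4'') does not deliver it. Chordality of the undirected components is not witnessed by any rule premise, since all four rules require at least one already-directed edge; it follows instead from the v-structure initialization: a chordless cycle of length at least $4$ among undirected edges would, in the DAG $\cG$ itself, be oriented acyclically and hence contain a node whose two cycle-edges both point into it, and whose two cycle-neighbors are non-adjacent --- a v-structure of $\cG$, so those two edges would already be directed in $\cG^\ast$, a contradiction. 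Likewise, the chain-graph property (no partially directed cycle survives closure) requires its own induction on cycle length using Rules~1 and~2 together with chordality; it is not a one-line consequence of ``a rule would have fired.'' Finally, beware of circularity in your constructive step: the ``any clique can be made most upstream'' facts recalled in Section~2.3 are properties of $\cE(\cG)$ that are themselves established via the characterization you are proving, so you must derive the analogous statements for $\cG^\ast$ directly from your structural lemma rather than import them. Your boundary check --- that Rule~1 closure rules out a new v-structure formed by a directed parent and a reoriented undirected edge --- is the right kind of argument and is what the full proof must do at every such interface.
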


Figure~\ref{fig:a2} illustrates Meek rule 1, which we will use in our lower bound proof.
\begin{figure}[h]
\centering
\includegraphics[width=0.25\textwidth]{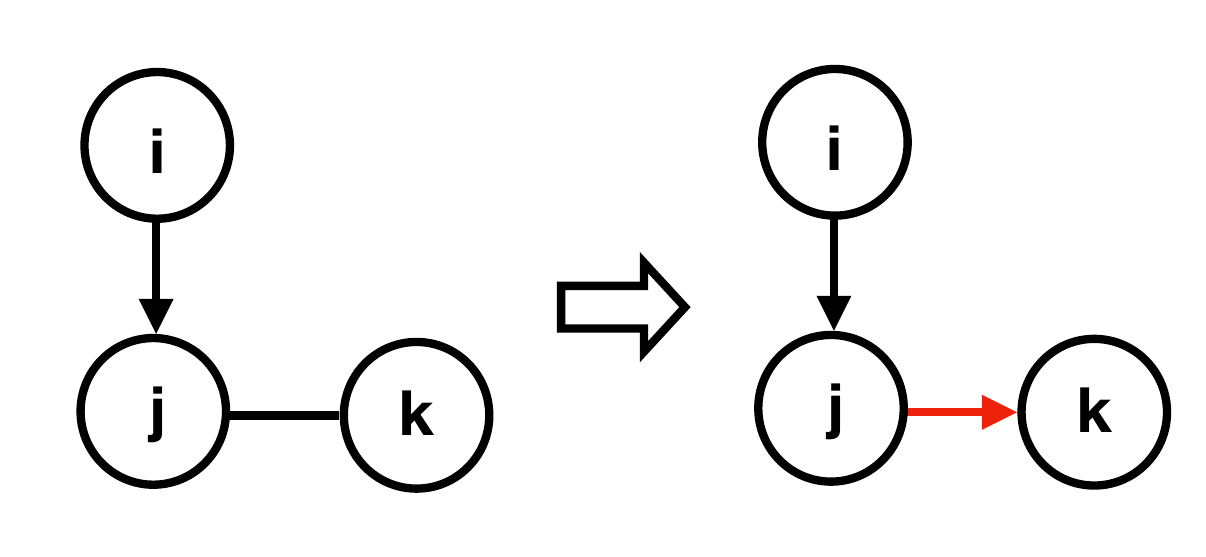}
\caption{Illustration of Meek rule 1.}\label{fig:a2}
\vspace{-.1in}
\end{figure}

\section{OMITTED PROOFS OF LOWER BOUND}


\subsection{Proof of Lemma~\textcolor{red}{5}}
We now prove Lemma~\textcolor{red}{5}, restated below:

\begin{namedlemma}[Lemma 5]\label{lm:2dag1var}
    Let $\cG_1$ and $\cG_2$ be two DAGs such that $\cG_1$ differs from $\cG_2$ \textup{only} by a missing edge $i\to j$, which is undirected in $\cE(\cG_2)$. Let $\cP:a-\dots-b$ be a common path shared by $\cG_1$ and $\cG_2$. For any arbitrary set $C\subset [n]\setminus\{a,b\}$, 
    \begin{enumerate}
        \item[(1)] if $\cP$ is inactive given $C$ in $\cG_2$, then it must be inactive given $C$ in $\cG_1$;
        \item[(2)] if $\cP$ is active given $C$ in $\cG_2$, then there is a path connecting $a,b$ that is active given $C$ in $\cG_1$.
    \end{enumerate}
\end{namedlemma}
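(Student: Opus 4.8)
The plan is to analyze the common path $\cP$ edge-by-edge and to exploit that $\cG_1$ is an edge-subgraph of $\cG_2$ (they differ only by the deleted edge $i\to j$). The crucial first observation is that whether an internal node $c$ of $\cP$ is a collider is determined solely by the orientations of the two edges of $\cP$ incident to $c$; since $\cP$ is a \emph{common} path, these orientations agree in $\cG_1$ and $\cG_2$, so $\cP$ has exactly the same colliders and non-colliders in both DAGs. Whether a non-collider of $\cP$ lies in $C$ is also a graph-independent condition. Hence the activity status of $\cP$ can differ between the two graphs only through the open/closed status of its colliders, i.e. whether a collider $c$ satisfies $\cde_{\cG}(c)\cap C\neq\varnothing$. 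Because $\cG_1\subseteq\cG_2$, we have $\de_{\cG_1}(c)\subseteq\de_{\cG_2}(c)$ for every node, so a collider that is closed in $\cG_2$ remains closed in $\cG_1$ and a non-collider that blocks $\cP$ in $\cG_2$ still blocks it in $\cG_1$. This immediately gives part (1): an inactive $\cP$ stays inactive.

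For part (2), suppose $\cP$ is active given $C$ in $\cG_2$. All non-colliders of $\cP$ remain outside $C$, so the only way $\cP$ can fail to be active in $\cG_1$ is if some collider $d$ on $\cP$ is open in $\cG_2$ but closed in $\cG_1$; call such a $d$ \emph{broken}. I would first characterize broken colliders: since $d$ reaches $C$ in $\cG_2$ but not in $\cG_1$, every directed path from $d$ to $C$ in $\cG_2$ must traverse the deleted edge, so some such path has the form $d\to\cdots\to i\to j\to\cdots\to w$ with $w\in C$. The prefix $d\to\cdots\to i$ and the suffix $j\to\cdots\to w$ do not use $i\to j$ and hence survive in $\cG_1$, giving a directed path from $d$ to $i$ and from $j$ to $w$ there, while $i$ itself fails to reach $C$ in $\cG_1$ (otherwise $d$ would too). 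The remaining task is to \emph{reroute}: produce some path connecting $a$ and $b$ that is active given $C$ in $\cG_1$, using these surviving directed segments.

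The reroute is where the hypothesis that $i\to j$ is undirected in $\cE(\cG_2)$ is indispensable, since a generic edge deletion need not preserve d-connection. I would use that a reversible edge can be neither a v-structure edge nor forced by Meek's rules, which constrains the neighborhood of a broken collider. For instance, when the broken collider is $d=i$ and is unshielded, v-structure preservation (equivalently Meek rule 1) forces both path-parents $x,y$ of $i$ to be parents of $j$ as well — otherwise orienting $j\to i$ somewhere in the class would create a new v-structure $x\to i\gets j$ — so the segment $x\to i\gets y$ can be replaced by $x\to j\gets y$, where $j$ is open in $\cG_1$; when a broken collider is shielded, its two flanking nodes are adjacent and supply a direct bypass edge. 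I expect the main obstacle to be exactly this rerouting step: assembling the alternative active path from the surviving directed segments and the adjacencies forced by reversibility, verifying that it is a \emph{simple} path (handling the cases where $i$ or $j$ already lies on $\cP$, or where the replacement meets $C$), and checking that no newly created collider is itself closed. Organizing this into a clean case analysis — according to whether $d$ equals $i$, whether the collider is shielded, and how $\cP$ interacts with the chordal chain component containing the undirected edge $i-j$ — is the technical heart of the proof, which the invariance observations above reduce to a single localized repair.
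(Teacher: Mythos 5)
Your part (1) is correct and is exactly the paper's argument: a common path has the same colliders and non-colliders in both DAGs, and $\cde_{\cG_1}(d)\subseteq\cde_{\cG_2}(d)$, so every blocking condition transfers from $\cG_2$ to $\cG_1$.

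Part (2), however, is a plan rather than a proof, and the step you explicitly defer --- the rerouting --- is where essentially all of the difficulty sits; your sketch also omits the hardest configuration. A broken collider $d$ need not be $i$ and need not be adjacent to $j$: it may reach $C$ in $\cG_2$ only through a long directed path $d\to d_1\to\dots\to d_{k-1}\to i\to j\to\dots\to c$ with $c\in C$. Your unshielded repair (replace $x\to i\gets y$ by $x\to j\gets y$) applies only when $d=i$; for a strict ancestor $d$ of $i$, v-structure preservation says nothing directly about $j$, and no local bypass is available. The paper closes this case with a global extremal argument that your proposal lacks: among all common $a$--$b$ paths active given $C$ in $\cG_2$, pick one of minimal length $L$, and among those one minimizing the length $L_C(\cQ)$ of a shortest directed path from a broken collider to $C$ in $\cG_2$. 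Then either (i) the path-neighbors $e,f$ of $d$ are adjacent, and bypassing $d$ through the edge $e\sim f$ gives a strictly shorter active common path, contradicting minimality of $L$; or (ii) $e\not\sim f$ and some $d_{k'}$ on the directed chain is adjacent to both $e$ and $f$, and rerouting the collider through $d_{k'}$ preserves the length but strictly decreases $L_C$, again a contradiction; or (iii) $e\not\sim f$ and no $d_{k'}$ is adjacent to both, in which case Meek rule 1 applied recursively along $e\to d - d_1 - \dots - d_{k+1}$ forces $i\to j$ to be directed in $\cE(\cG_2)$, contradicting reversibility. This double minimization is also what answers the worries you raise yourself (is the repaired path simple and still active? are its new colliders open? does the repair process terminate?): each repair either yields an immediate contradiction or strictly decreases a well-founded quantity, so no infinite regress of repairs can occur. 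Without such a termination mechanism and without handling broken colliders that are distant ancestors of $i$, a single localized repair does not establish (2).
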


\begin{proof}
Since $\cG_1$ differs from $\cG_2$ by missing one edge and $\cP$ is a shared common path, we know that $\cP$ has the same set of colliders (and non-colliders) in $\cG_1$ and $\cG_2$.

We first show (1). If $\cP$ is inactive given $C$ in $\cG_2$, then either there exists a non-collider $c\in \cP\cap C$ or there exists a collider $d\in\cP$ such that $\cde_{\cG_2}(d)\cap C=\varnothing$. In the first case, $c\in\cP\cap C$ is a non-collider in $\cG_1$ as well. Thus $\cP$ is inactive given $C$ in $\cG_1$. In the second case, since $\cG_1$ and $\cG_2$ differ only by one missing edge, it holds that $\cde_{\cG_1}(d)\subseteq \cde_{\cG_2}(d)$. Therefore $d\in\cP$ is a collider in $\cG_1$ and $\cde_{\cG_1}(d)\cap C\subseteq \cde_{\cG_2}(d)\cap C=\varnothing$. Thus $\cP$ is inactive given $C$ in $\cG_1$, which proves (1).

We now show (2). Assume to the contrary that $\cP$ is active given $C$ in $\cG_2$ and that there is \emph{no} path connecting $a,b$ that is active given $C$ in $\cG_1$.

Let $\cQ:a-\dots-b$ be an arbitrary common path shared by $\cG_1$ and $\cG_2$ such that it is active given $C$ in $\cG_2$. For example, $\cQ$ can be $\cP$. By the assumption, $\cQ$ is inactive given $C$ in $\cG_1$. Since $\cQ$ is active given $C$ in $\cG_2$, every non-collider $c\in \cQ$ satisfies $c\notin C$ and every collider $d\in\cQ$ satisfies $\cde_{\cG_2}(d)\cap C\neq \varnothing$. Since $\cQ$ is inactive given $C$ in $\cG_1$, there exists a collider $d\in\cQ$ such that $\cde_{\cG_1}(d)\cap C=\varnothing$. This is because all (if any) non-colliders on $\cQ$ in $\cG_1$ must not belong to $C$: since $\cG_1$ differs from $\cG_2$ by only one missing edge, any non-collider on $\cQ$ in $\cG_1$ is also a non-collider in $\cG_2$. By the fact that any non-collider on $\cQ$ in $\cG_2$ is not in $C$, there is no non-collider $c\in\cQ\cap C$ in $\cG_1$. Note that $d$ is also a collider in $\cG_2$. Therefore it must hold that $\cde_{\cG_2}(d)\cap C\neq \varnothing$, otherwise $\cQ$ is inactive given $C$ in $\cG_2$.

Following the arguments in the former paragraph, we know that there must exist a shared collider $d\in\cQ$ such that $\cde_{\cG_2}(d)\cap C\neq \cde_{\cG_1}(d)\cap C=\varnothing$. Since $\cde_{\cG_2}(d)\cap C\neq \varnothing$, there exists a directed path $d\rightarrow \cdots \rightarrow c\in C$ in $\cG_2$; denote $L_C(\cQ)$ as the length of the shortest directed path like this for a given $\cQ$.

Consider the following $\cQ$. Denote $[\cP]$ as the set of all common paths connecting $a,b$ shared by $\cG_1$ and $\cG_2$ that are active given $C$ in $\cG_2$. Since $\cP\in[\cP]$, we know $[\cP]\neq \varnothing$. Let $L$ be the length of the shortest path in $[\cP]$. Let $\cQ\in[\cP]$ be the path among all paths with length $L$ such that $L_C(\cQ)$ is minimized. We will show a contradiction.

Let the nodes $e,d,f,c$ be such that $e\rightarrow d\leftarrow f \in \cQ$, $\cde_{\cG_2}(d)\cap C\neq \cde_{\cG_1}(d)\cap C=\varnothing$, and there is a directed path $\cL: d\rightarrow \dots \rightarrow c\in C$ of length $L_C(\cQ)$ in $\cG_2$. Since $\cde_{\cG_1}(d)\cap C=\varnothing$, path $\cL$ must not be in $\cG_1$. Since $\cG_1$ only differs from $\cG_2$ by missing one edge $i\rightarrow j$, we know $i\rightarrow j$ must be on $\cL$. Suppose $d\rightarrow d_1\rightarrow\dots\rightarrow d_{k-1}\to i\to j$ on $\cL$ for some integer $k$. Denote for simplicity $d_{k}=i$ and $d_{k+1}=j$.

If $e\not\sim f$ in $\cG_2$, then $e\rightarrow d\leftarrow f$ construes a v-structure which is directed in $\cE(\cG_2)$. If there is $k'\in [k+1]$ such that $e - d_{k'} - f\in \cG_2$, then by acyclicity we have $e\rightarrow d_{k'}\leftarrow f\in \cG_2$. Then the path $\cQ_1: a-\dots-e\to d_{k'}\from f-\dots -b$ by removing $d$ from $\cQ$ and connecting $e$ to $f$ by $d_{k'}$ has length $L$ but a directed path from $d_{k'}$ to $c$ in $\cG_2$ of length $L_C(\cQ)-k'<L_C(\cQ)$. Note that $\cQ_1$ is also connecting $a,b$, shared by $\cG_1$ and $\cG_2$, and active given $C$ in $\cG_2$ (since it has the same set of non-colliders as $\cQ$ and the additional collider $d_{k'}$ has descendant $c\in C$ in $\cG_2$). This contradicts $\cQ$ having $L_C(\cQ)$ minimized. Therefore for all $k'\in [k+1]$, node $d_{k'}$ will not be adjacent to both $e$ and $f$ in $\cG_2$. By using Meek rule 1 (Proposition~\ref{prop:1}) recursively, we know that $d\rightarrow d_{1}\rightarrow\dots \rightarrow d_{k+1}$ is directed in $\cE(\cG_2)$. This contradicts $i\to j$ being undirected in $\cE(\cG_2)$.

\begin{figure}[h]
    \centering
     \begin{subfigure}{0.4\textwidth}
         \centering
         \includegraphics[width=0.38\textwidth]{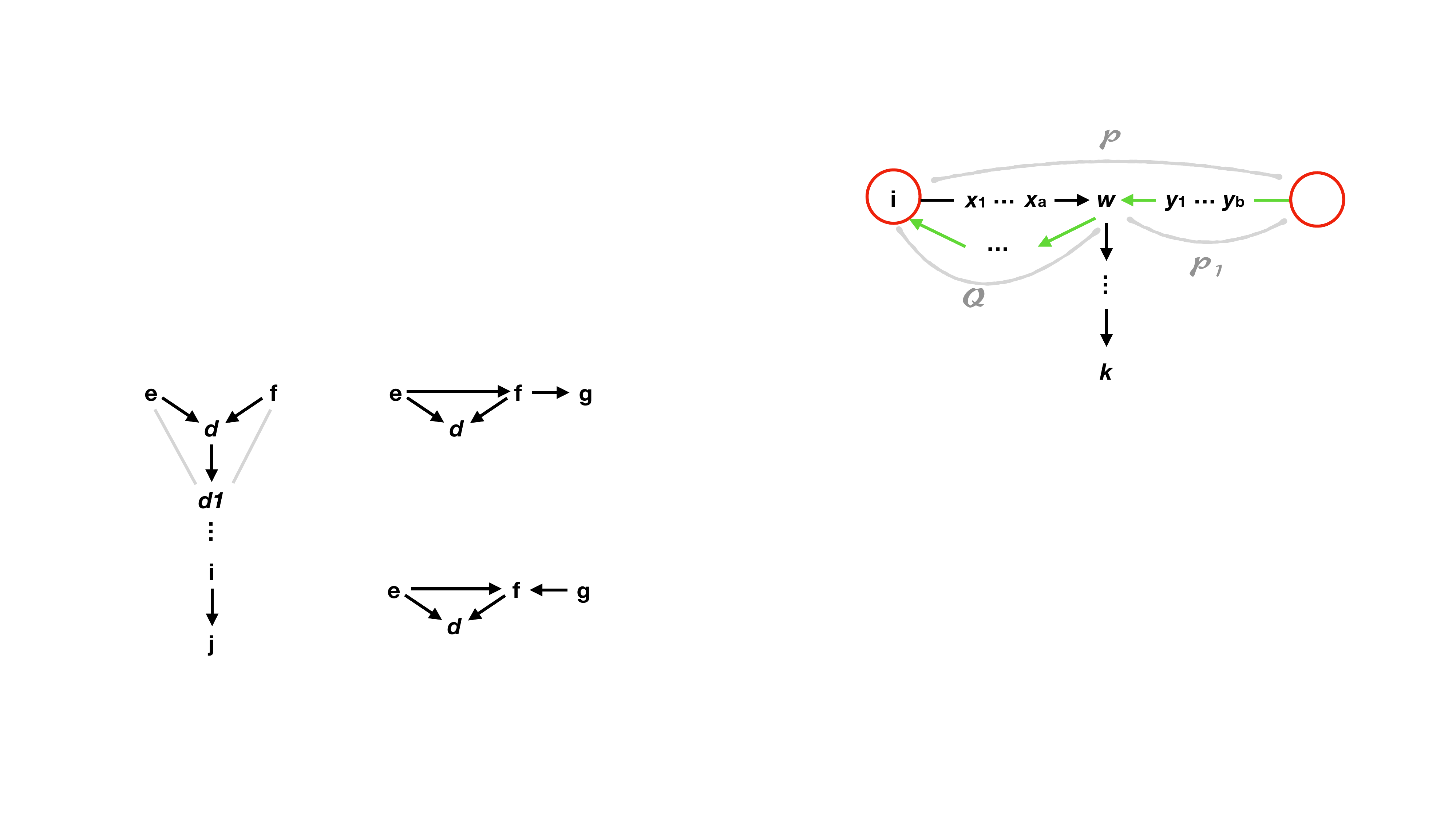}
         \caption{$e\not\sim f$ in $\cG_2$}
     \end{subfigure}
     \begin{subfigure}{0.45\textwidth}
         \centering
         \includegraphics[width=0.38\textwidth]{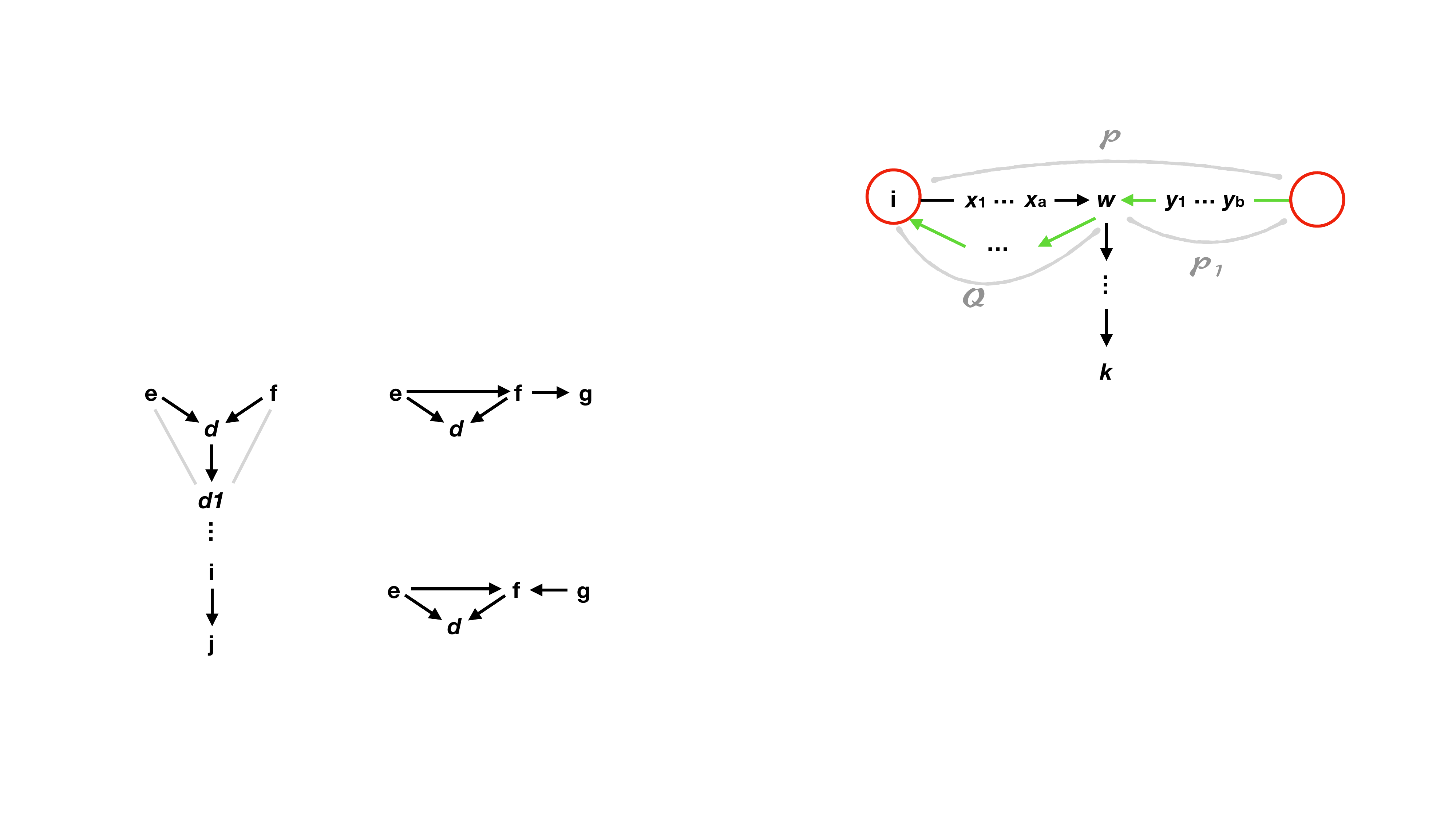}
         \caption{$e\sim f$ in $\cG_2$}
     \end{subfigure}     
\caption{Illustration for the proof of Lemma~\textcolor{red}{5}.}\label{fig:a3}
\end{figure}

If $e\sim f$ in $\cG_2$, we can assume without loss of generality that $e\to f\in\cG_2$. Denote $g$ as the node such that $e\to d\leftarrow f-g\in \cL'$. If $f\to g\in \cG_2$, then the path $\cQ_2: a-\dots -e\to f\to g-\dots -b$ by removing $d$ from $\cQ$ and connecting $e$ to $f$ by $e\to f$ has length $L-1<L$. However, it is also connecting $a,b$, shared by $\cG_1$ and $\cG_2$, and active given $C$ in $\cG_2$ (since it has the same set of non-colliders as $\cQ$ and its colliders are also colliders of $\cQ$). This contradicts $L$ being the smallest. If $f\leftarrow g\in \cG_2$, then the path $\cQ_3:a-\dots -e\to f\from g-\dots -b$ obtained similarly as $\cQ_2$ also shares similar properties as $\cQ_2$. It is active given $C$ in $\cG_2$ because its non-colliders are also non-colliders of $\cQ$ and the additional collider $f$ is a parent of $d$ which has descendant $c\in C$ in $\cG_2$. This contradicts $L$ being the smallest. Therefore there is always a contradiction if we assume the contrary of (2). Thus (2) is proven.
\end{proof}

An immediate corollary of this lemma is given in Corollary~\textcolor{red}{6}. We restate the corollary below. Note that a formal proof is provided in the main text.
\begin{namedlemma}[Corollary 6]\label{cor:2dag1var-c}
     Let $\cG_1$ and $\cG_2$ be two DAGs such that $\cG_1$ differs from $\cG_2$ \textup{only} by the missing edge $i\to j$, which is undirected in $\cE(\cG_2)$. For any nodes $a\neq b\in [n]$ and set $C\subset [n]\setminus\{a,b\}$, 
     \begin{enumerate}
         \item[(1)] if $a\ci b\mid_{\cG_2} C$, then $a\ci b\mid_{\cG_1} C$;
         \item[(2)] if $a\nci b\mid_{\cG_2} C$ and there is path from $a$ to $b$ in $\cG_1$ that is active given $C$ in $\cG_2$, then $a\nci b\mid_{\cG_1} C$. 
     \end{enumerate}
\end{namedlemma}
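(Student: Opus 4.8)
The plan is to derive both parts directly from Lemma~\ref{lm:2dag1var}, since the corollary is essentially its global counterpart: Lemma~\ref{lm:2dag1var} concerns a single shared path, whereas a d-separation statement quantifies over \emph{all} paths between $a$ and $b$. The structural observation I would establish at the outset is that, because $\cG_1$ is obtained from $\cG_2$ by deleting the single edge $i\to j$, the edge set of $\cG_1$ is contained in that of $\cG_2$; hence every path present in $\cG_1$ is automatically a common path shared by $\cG_1$ and $\cG_2$, so Lemma~\ref{lm:2dag1var} applies to any such path.

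For part (1), I would unfold the definition of d-separation. The hypothesis $a\ci b\mid_{\cG_2} C$ says that every path from $a$ to $b$ in $\cG_2$ is inactive given $C$; to conclude $a\ci b\mid_{\cG_1} C$ I must show every path from $a$ to $b$ in $\cG_1$ is inactive given $C$. Fixing an arbitrary path $\cP$ in $\cG_1$, the opening observation makes it a common path, hence also a path in $\cG_2$, and therefore inactive given $C$ in $\cG_2$. Applying Lemma~\ref{lm:2dag1var}(1) to $\cP$ yields that $\cP$ is inactive given $C$ in $\cG_1$. As $\cP$ was arbitrary, all paths are inactive and $a\ci b\mid_{\cG_1} C$ follows.

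For part (2), the hypothesis already supplies a path from $a$ to $b$ lying in $\cG_1$ that is active given $C$ in $\cG_2$; this is precisely the common path needed to invoke Lemma~\ref{lm:2dag1var}(2). That lemma then guarantees some path connecting $a,b$ that is active given $C$ in $\cG_1$, and the existence of a single active path is exactly the statement $a\nci b\mid_{\cG_1} C$.

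I do not anticipate a genuine obstacle, as each part reduces to a one-line deduction from Lemma~\ref{lm:2dag1var} once the ``paths of $\cG_1$ are common paths'' remark is in place. The only subtlety worth flagging lies in part (2): the extra hypothesis that the active path resides in $\cG_1$ (and not merely in $\cG_2$) is essential, because Lemma~\ref{lm:2dag1var}(2) is stated for common paths. A path that is active given $C$ in $\cG_2$ but traverses the deleted edge $i\to j$ would not survive in $\cG_1$, and the conclusion could fail without this guarantee — which is why the statement builds that condition explicitly into the hypothesis of (2).
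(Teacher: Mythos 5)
Your proposal is correct and follows essentially the same route as the paper: both parts reduce to Lemma~5 after noting that every path of $\cG_1$ is a common path of the two DAGs, which is precisely the paper's argument. Your closing remark on why the extra hypothesis in part~(2) is needed is a sensible clarification but does not change the substance.
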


\subsection{Proof of Lemma~\textcolor{red}{7}}

Using Corollary~\textcolor{red}{6}, we can prove Lemma~\textcolor{red}{7} restated below.
\begin{namedlemma}[Lemma 7]\label{lm:2dag1ineq}
Let $\cG_1$ and $\cG_2$ be two DAGs such that $\cG_1$ differs from $\cG_2$ \textup{only} by the missing edge $i\to j$, which is undirected in $\cE(\cG_2)$. Denote the maximal undirected clique in $\cG_2$ containing $i,j$ by $\cS$. Then $I_{\cG_1}(A,B\mid C)\neq I_{\cG_2}(A,B\mid C)$ \textup{only} if $A\ci B\mid_{\cG_1} C$, $A\nci B\mid_{\cG_2} C$, and 
\[
\left(\pa_{\cG_2}(j)\cap\ch_{\cG_2}(i)\right) \cap \cS \subseteq C\cap \cS \subseteq \left(\pa_{\cG_2}(j)\setminus\{i\}\right) \cap \cS.\footnote{In fact, one can show $C\cap \cS =\left(\pa_{\cG_2}(j)\setminus\{i\}\right) \cap \cS$. However, we will only make use of this lemma.}
\]
\end{namedlemma}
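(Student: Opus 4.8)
The plan is to assume the two tests disagree and squeeze out all three conclusions from a single witnessing path. First I would fix the direction of the disagreement: since $\cG_1$ is $\cG_2$ with one edge deleted, part (1) of Corollary~\ref{cor:2dag1var-c} (extended from nodes to the sets $A,B$, using that every $\cG_1$-path is a $\cG_2$-path together with Lemma~\ref{lm:2dag1var}(1)) gives $A\ci B\mid_{\cG_2}C\Rightarrow A\ci B\mid_{\cG_1}C$. Hence the only possible disagreement is $A\ci B\mid_{\cG_1}C$ and $A\nci B\mid_{\cG_2}C$, which is the first assertion. I then pick $a\in A$, $b\in B$ and a path $\cP:a\cdots b$ that is active given $C$ in $\cG_2$, and apply the contrapositive of part (2) of the corollary to $a,b$: since $a\ci b\mid_{\cG_1}C$ but $a\nci b\mid_{\cG_2}C$, \emph{no} path from $a$ to $b$ lying in $\cG_1$ can be active given $C$ in $\cG_2$. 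Call this principle $(\star)$. In particular $\cP$ is not a $\cG_1$-path, so it must traverse the deleted edge; write $\cP$ as $a\cdots i\to j\cdots b$.

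Reading off the status of $i$ and $j$ on $\cP$ is the next step. Because $i\to j$ points out of $i$, node $i$ is not a collider on $\cP$, so as an active non-collider (or as an endpoint) it satisfies $i\notin C$; this already removes $i$ from $C\cap\cS$. The two containments now become statements about the other nodes of the clique $\cS$, which I would attack by surgery inside $\cS$: every pair of clique nodes other than $\{i,j\}$ is joined in $\cG_1$ with orientation fixed by the topological order, so I can splice short clique-detours into $\cP$ and test them against $(\star)$ (extracting an active path from an active walk in the standard way if a detour node happens to lie on $\cP$).

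For the lower containment $\big(\pa_{\cG_2}(j)\cap\ch_{\cG_2}(i)\big)\cap\cS\subseteq C$ I argue by contradiction: if a node $w\in\cS$ strictly between $i$ and $j$ (so $i\to w\to j$) were outside $C$, I replace the edge $i\to j$ on $\cP$ by the two surviving clique edges $i\to w\to j$. The statuses of $i$ and $j$ are unchanged (the edge out of $i$ and the edge into $j$ are preserved), and $w$ is a non-collider outside $C$, so the spliced object is a $\cG_1$-path active given $C$ in $\cG_2$, contradicting $(\star)$. Thus every strictly-between node lies in $C$.

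The upper containment $C\cap\cS\subseteq\big(\pa_{\cG_2}(j)\setminus\{i\}\big)\cap\cS$ is where I expect the real difficulty, as it must forbid $j$ and every later clique node from $C$. The easy half, granting $j\notin C$, handles a node $v\in\cS$ with $j\to v$: rerouting $i\to j$ as $i\to v\gets j$ makes $v$ a collider activated by $v\in C$ while leaving the statuses of $i,j$ intact, again contradicting $(\star)$. \textbf{The hard part is excluding $j\in C$ itself.} If $j\in C$ then, since an active non-collider cannot be conditioned on, $j$ must be a collider $i\to j\gets q$ on $\cP$, and now every naive detour fails: routing into $j$ through a between-node makes that node a blocking non-collider (and those are exactly the nodes just shown to lie in $C$), while routing through a later clique node flips $j$ into a blocked non-collider. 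I would resolve this in the spirit of the proof of Lemma~\ref{lm:2dag1var}(2): assume no $\cG_1$-reroute keeping $j$ an active collider exists, take the shortest such configuration, and apply Meek rule~1 (Proposition~\ref{prop:1}) repeatedly to force the relevant directed chain ending at $i\to j$ to be \emph{directed} in $\cE(\cG_2)$, contradicting the hypothesis that $i\to j$ is undirected there. Combining all cases gives the stated containment; the exact equality noted in the footnote would follow by extending the same detour argument to the before-$i$ clique nodes.
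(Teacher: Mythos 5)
Your overall strategy is the same as the paper's: reduce a disagreement to a single $\cG_2$-active witnessing path $\cP$ through $i\to j$, observe that no $\cG_1$-path from $a$ to $b$ may be active given $C$ in $\cG_2$ (your principle $(\star)$), and then splice clique detours $i\to w\to j$ and $i\to v\from j$ into $\cP$ to obtain the two containments. Those parts are correct, and match the paper's constructions $\cP_1,\cP_2$. The one place your argument is incomplete is exactly the place you flag: excluding $j\in C$. Note that this is the same issue that threatens your "statuses of $i,j$ are unchanged" claim in the $i\to v\from j$ splice, since if $j$ is a collider $i\to j\from q$ on $\cP$ it flips to a non-collider on the spliced path. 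Your proposed fix --- a shortest-configuration argument with repeated applications of Meek rule~1 in the style of the proof of Lemma~5(2) --- does not transfer as stated: that recursion runs along a nontrivial directed path from an active collider down to a node of $C$, a path which happens to contain $i\to j$, whereas here the offending collider is $j$ itself and its descending path into $C$ is trivial (namely $j\in C$), so there is no chain of edges for Meek rule~1 to orient.

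The case can be closed more directly. If $j$ is a collider on $\cP$, write $i\to j\from q$ with $q\neq i$. Since $i\to j$ is undirected in $\cE(\cG_2)$, the triple $i\to j\from q$ cannot be a v-structure, so $q\sim i$ in $\cG_2$. Delete $j$ from $\cP$ and join $i$ to $q$ by the edge between them; the result is a simple $\cG_1$-path, and it is active given $C$ in $\cG_2$: every node other than $i,q$ keeps its status; $q$ had the outgoing edge $q\to j$ on $\cP$, so $q\notin C$, and if $q$ becomes a collider on the new path it is activated because $\cde_{\cG_2}(q)\supseteq\cde_{\cG_2}(j)$ meets $C$; symmetrically, if $i$ becomes a collider (which requires $q\to i$) it is activated because $j\in\de_{\cG_2}(i)$ and $\cde_{\cG_2}(j)$ meets $C$. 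This contradicts $(\star)$. Hence $j$ is a non-collider or endpoint on $\cP$, so $j\notin C$, which settles your hard case and simultaneously legitimizes the $i\to v\from j$ splice. For what it is worth, the paper's own proof performs the same two splices as yours and does not treat this case explicitly either, so spotting it is to your credit; you just need the argument above in place of the Meek-rule recursion.
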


\begin{proof}
By Corollary~\textcolor{red}{6} (1), we know that $A\ci B\mid_{\cG_2} C$ would mean $A\ci B\mid_{\cG_1} C$. Thus $I_{\cG_1}(A,B\mid C)\neq I_{\cG_2}(A,B\mid C)$ \textup{only} if $A\ci B\mid_{\cG_1} C$, $A\nci B\mid_{\cG_2} C$. Then by Corollary~\textcolor{red}{6} (2), we know that $A\ci B\mid_{\cG_1} C$, $A\nci B\mid_{\cG_2} C$ only if every path from $A$ to $B$ in $\cG_1$ is inactive given $C$ in $\cG_2$. Since $A\nci B\mid_{\cG_2} C$, there must be a path $\cP$ from $a\in A$ to $b\in B$ that is active given $C$ in $\cG_2$. Since $\cG_1$ differs from $\cG_2$ by only missing one edge $i\to j$ and $\cP$ is not a path in $\cG_1$, then $\cP$ must contain $i\to j$ on it. Suppose $\cP:a-\dots -i\to j -\dots - b$. 

\begin{figure}[h]
\centering
\includegraphics[width=0.6\textwidth]{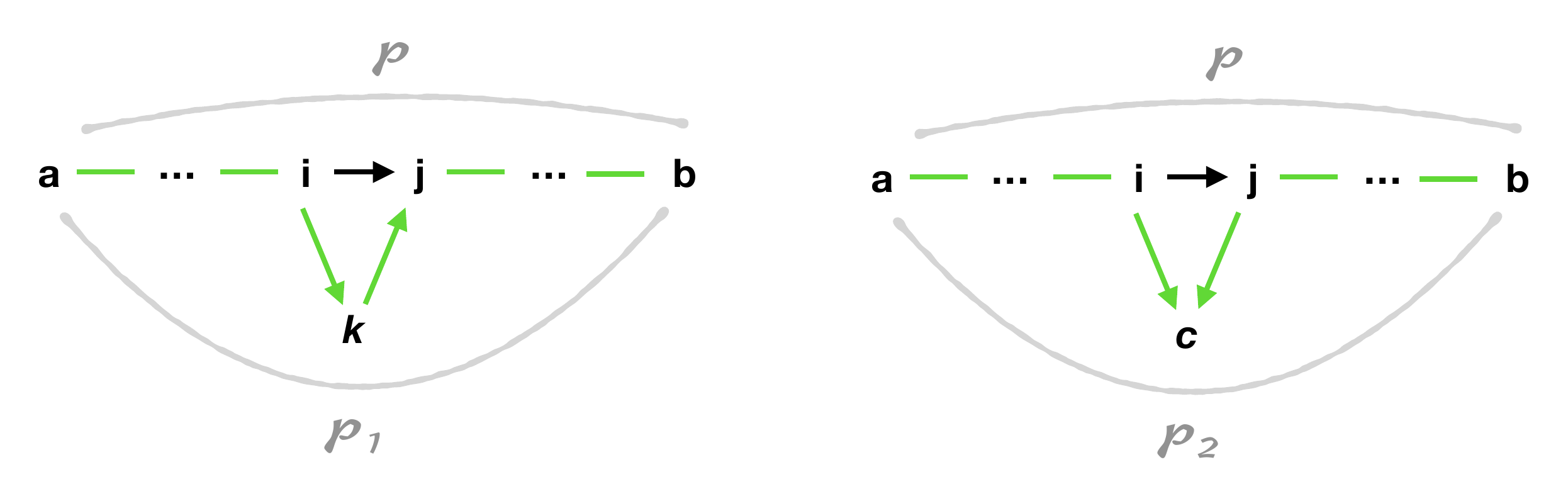}
\caption{Illustration of paths $\cP$, $\cP_1,\cP_2$.}\label{fig:a4}
\vspace{-.1in}
\end{figure}

We first show that $\left(\pa_{\cG_2}(j)\cap\ch_{\cG_2}(i)\right) \cap \cS \subseteq C\cap \cS $. Let $k$ be an arbitrary node in $\left(\pa_{\cG_2}(j)\cap\ch_{\cG_2}(i)\right) \cap \cS$. If $k\notin C$, then the path $\cP_1:
a-\dots -i\to k\to j-\dots b$ by removing $i\to j$ from $\cP$ but connecting $i,j$ using $k$ is active given $C$ in $\cG_2$. This is because all its colliders are colliders of $\cP$ and the additional non-collider $k\notin C$. However, $\cP_1$ is in $\cG_1$ as it does not contain $i\to j$; a contradiction. Therefore $k\in C$ and $\left(\pa_{\cG_2}(j)\cap\ch_{\cG_2}(i)\right) \cap \cS \subseteq C\cap \cS$.

We now show that $C\cap \cS \subseteq \left(\pa_{\cG_2}(j)\setminus\{i\}\right) \cap \cS$. Suppose there is $c\in C\cap \cS$ such that $c\notin\pa_{\cG_2}(j)\setminus\{i\}$. Since $c\in\cS$, it is adjacent to both $i,j$ in $\cG_2$. Thus $i\rightarrow c\leftarrow j$ by $c\notin\pa_{\cG_2}(j)\setminus\{i\}$ and $i\in\pa_{\cG_2}(j)$. Then the path $\cP_2:a-\dots -i\to c\from j-\dots -b$ by removing $i\to j$ from $\cP$ and connecting $i,j$ using $i\rightarrow c\leftarrow j$ is active given $C$ in $\cG_2$ (as all its non-colliders are non-colliders of $\cP$ and the additional collider $c\in C$). However, $\cP_2$ is in $\cG_1$ as it does not contain $i\to j$; a contradiction. Therefore $C\cap \cS \subseteq \left(\pa_{\cG_2}(j)\setminus\{i\}\right) \cap \cS$.
\end{proof}

With these results the lower bound in Theorem~\textcolor{red}{1} can be proven as described in Section~\textcolor{red}{4} in the main text.






\section{OMITTED PROOFS OF UPPER BOUND}

We first explain why the total number of
maximal undirected cliques in $[\cG]$ cannot exceed $n$. Since the undirected edges in $\cE(\cG)$ correspond to a collection of chordal chain components, it suffices to show that in any chordal graph of size $k$, the number of maximal undirected cliques is at most $k$. This is a well-known result; see for example \citep{dirac1961rigid}.

To show the upper bound in Theorem~\textcolor{red}{2}, we only need to prove Proposition \textcolor{red}{9} and Lemma \textcolor{red}{10}. With these results, Theorem~\textcolor{red}{2} can be obtained as described in Section~\textcolor{red}{5.2} in the main text.

\begin{namedlemma}[Proposition 9]
    If $\cH\leq\cG$ and $\skel(\cH)\subsetneq \skel(\cG)$, then there exists a DAG $\cH'$ such that $\cH\leq\cH'\leq\cG$ and $\cH'$ is missing one edge in $\cG'$ for some $\cG'\in [\cG]$.
\end{namedlemma}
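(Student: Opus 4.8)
The plan is to invoke the constructive machinery behind Chickering's proof of Meek's conjecture. Recall that Meek's conjecture, proven by \citet{chickering2002optimal}, states that whenever $\cH \leq \cG$ (i.e., $\cG$ is an independence map of $\cH$), there is a finite sequence of operations transforming $\cG$ into $\cH$, where each operation is either a \emph{covered edge reversal} or a \emph{single edge deletion}, and every intermediate graph $\cG = \cG_0, \cG_1, \dots, \cG_m = \cH$ remains an independence map of $\cH$, i.e., $\cH \leq \cG_k \leq \cG$ for all $k$. The key structural fact I would extract is that covered edge reversals preserve the skeleton and the Markov equivalence class, whereas edge deletions strictly shrink the skeleton. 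Since we assume $\skel(\cH) \subsetneq \skel(\cG)$, at least one deletion operation must occur in this sequence.

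Concretely, I would look at the \emph{first} edge-deletion step in Chickering's sequence. Let $k^\star$ be the smallest index at which $\cG_{k^\star} \to \cG_{k^\star+1}$ is a single-edge deletion; all operations before it are covered edge reversals. Set $\cH' := \cG_{k^\star+1}$ and $\cG' := \cG_{k^\star}$. Because $\cG_0 = \cG, \cG_1, \dots, \cG_{k^\star}$ are connected by covered edge reversals only, they all share the same skeleton and v-structures as $\cG$, hence $\cG' = \cG_{k^\star} \in [\cG]$. By construction $\cH' = \cG_{k^\star+1}$ is obtained from $\cG'$ by deleting exactly one edge, so $\cH'$ is missing one edge in $\cG'$. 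Finally, the independence-map monotonicity of the sequence gives $\cH \leq \cH' = \cG_{k^\star+1} \leq \cG_{k^\star} = \cG'$, and since $\cG' \in [\cG]$ we also have $\cG' \leq \cG$ (they are equivalent), so $\cH \leq \cH' \leq \cG$ as required.

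The main obstacle, and the part warranting the most care, is making precise the claim that covered edge reversals preserve the Markov equivalence class while preserving the independence-map property, and that the chosen sequence can be arranged so that the reversals genuinely come before (or can be commuted to before) the first deletion. I would lean on the precise statement in \citet{chickering2002optimal}: the transformation sequence produced there has each step keeping the graph an independence map of the target $\cH$, and a covered edge reversal by definition does not alter the set of adjacencies or the conditional independence model. The one subtlety is that I do not actually need the reversals to all precede the deletion in the literal sequence; I only need to identify a single intermediate DAG $\cH'$ that is one deletion away from \emph{some} member of $[\cG]$. So even if the first deletion in the raw sequence acts on a graph $\cG_{k^\star}$ that is not literally $\cG$, the fact that $\cG_{k^\star}$ is reachable from $\cG$ by covered edge reversals alone certifies $\cG_{k^\star} \in [\cG]$, which is exactly what the statement $\cG' \in [\cG]$ demands. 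I would therefore state explicitly which lemma of \citet{chickering2002optimal} guarantees (a) the alternating structure of reversals and deletions, and (b) that pre-deletion graphs are equivalent to $\cG$, and cite it as the crux of the argument rather than re-deriving the transformational characterization from scratch.
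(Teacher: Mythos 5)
Your proposal is correct and takes essentially the same route as the paper's proof: both invoke Chickering's transformational characterization (Theorem 4 of \citet{chickering2002optimal}) to obtain a monotone sequence $\cH \leq \cdots \leq \cG_1 \leq \cG_0 = \cG$ of covered edge reversals and single-edge deletions, locate the \emph{first} deletion step, and use the fact that covered edge reversals preserve the MEC (Lemma 2 of \citet{chickering2002optimal}) to conclude that the pre-deletion graph lies in $[\cG]$. The subtlety you flag about not needing reversals to literally precede all deletions is resolved exactly as in the paper, by minimality of the index of the first deletion.
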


\begin{proof}
    By Theorem 4 in \citep{chickering2002optimal}, there exists a sequence of DAGs $\cG_1,\dots,\cG_k$ such that 
    \begin{itemize}
        \item $\cH=\cG_{k+1}\leq \cG_k\leq\dots \leq \cG_1\leq \cG_0=\cG$;
        \item For each $i\in\{0,\dots,k+1\}$, $\cG_{i+1}$ differs from $\cG_i$ by either a covered edge flip in $\cG_i$ or a missing edge from $\cG_i$.
    \end{itemize}

    Note that by Lemma 2 in \citep{chickering2002optimal}, if $\cG_{i+1}$ differs from $\cG_i$ by a covered edge flip, then they are in the same MEC. Let $i$ be the smallest index such that $\cG_{i+1}$ is missing an edge from $\cG_i$. Such an $i$ exists since $\skel(\cH)\neq \skel(\cG)$. Then $[\cG_{i}]=\cdots=[\cG_0]=[\cG]$. Therefore letting $\cG'=\cG_{i}$ and $\cH=\cG_{i+1}$ concludes the proof.
\end{proof}

\begin{namedlemma}[Lemma 10]
    For any $i$ in $\cG'\in[\cG]$, there is $\pa_{\cG'}(i)=\pa_{[\cG]}(i)\cup C$ for some undirected clique $C\subseteq\adj_{[\cG]}(i)$.
\end{namedlemma}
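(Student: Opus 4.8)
The plan is to decompose $\pa_{\cG'}(i)$ into two disjoint groups---the edges already oriented into $i$ in the essential graph, and the edges that are undirected in $\cE(\cG)$ but that $\cG'$ happens to orient toward $i$---and then to show that the second group is an undirected clique inside $\adj_{[\cG]}(i)$.

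First I would classify each $j\in\pa_{\cG'}(i)$ by the status of the edge $i\sim j$ in $\cE(\cG)$, which is legitimate because $\cG'\in[\cG]$ shares the skeleton of $\cE(\cG)$. The edge cannot be oriented $i\to j$ in $\cE(\cG)$, since that orientation is forced in every member of $[\cG]$ and would contradict $j\to i$ in $\cG'$. If it is oriented $j\to i$ in $\cE(\cG)$, then $j\in\pa_{[\cG]}(i)$; conversely, every node in $\pa_{[\cG]}(i)$ is oriented into $i$ in all DAGs of the class and hence lies in $\pa_{\cG'}(i)$. The remaining parents are precisely those $j$ for which $i-j$ is undirected in $\cE(\cG)$; collecting them into a set $C$ gives $C\subseteq\adj_{[\cG]}(i)$ together with the disjoint decomposition $\pa_{\cG'}(i)=\pa_{[\cG]}(i)\cup C$.

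It then remains to show that $C$ is an undirected clique, which is the heart of the argument. Fix distinct $u,v\in C$, so that $u\to i$ and $v\to i$ in $\cG'$. If $u\not\sim v$, then $u\to i\gets v$ is a v-structure of $\cG'$; since all DAGs in a MEC share their v-structures, this v-structure also appears in $\cE(\cG)$, forcing the orientations $u\to i$ and $v\to i$ there and contradicting $u,v\in\adj_{[\cG]}(i)$. Hence $u\sim v$. To upgrade adjacency to an undirected edge, I would observe that a directed edge, say $u\to v$, would together with the undirected edges $v-i$ and $i-u$ close a partially directed cycle $u\to v-i-u$, which is impossible since $\cE(\cG)$ is a chain graph. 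Therefore $u-v$ is undirected, and $C$ is an undirected clique contained in $\adj_{[\cG]}(i)$.

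The main obstacle is this last step---promoting ``$u$ and $v$ are adjacent'' to ``$u$ and $v$ are joined by an undirected edge.'' The v-structure argument alone only yields adjacency, and excluding a directed edge between $u$ and $v$ relies on the global chain-graph structure of $\cE(\cG)$ recalled in Section~\ref{sec:mec} rather than on any purely local reasoning about the triangle $\{u,v,i\}$.
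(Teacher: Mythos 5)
Your proof is correct. The decomposition of $\pa_{\cG'}(i)$ and the v-structure argument showing $u\sim v$ match the paper's proof step for step; the only genuine divergence is in the step you flag as the crux, namely ruling out a directed edge between $u,v\in C$. The paper does this by invoking the clique-reordering property of \citet{wienobst2021polynomial} recalled in Section~\ref{sec:mec}: if $u\to v$ were directed in $\cE(\cG)$, the undirected edges $u-i$ and $i-v$ would lie in different maximal undirected cliques of the chain component containing $i$, so some DAG in $[\cG]$ makes the clique containing $i-v$ most upstream with $v\to i$, hence $i\to u$, producing the directed cycle $v\to i\to u\to v$. You instead observe that $u\to v-i-u$ is already a directed cycle in $\cE(\cG)$ itself under the paper's definition (at least one directed edge, all directed edges consistently oriented along the cycle), which is forbidden because every essential graph is a chain graph. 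Your route is more elementary: it uses only the chain-graph property stated in Section~\ref{sec:mec} and requires no construction of a witness DAG, and the paper's heavier machinery buys nothing additional here. A minor further point in your favor: you make explicit the inclusion $\pa_{[\cG]}(i)\subseteq\pa_{\cG'}(i)$ needed for the stated \emph{equality} (rather than mere containment of $\pa_{\cG'}(i)$ in $\pa_{[\cG]}(i)\cup\adj_{[\cG]}(i)$), which the paper covers only implicitly via the remark that all directed edges of $\cE(\cG)$ appear in $\cG'$.
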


\begin{proof}
    As reviewed in Section~\textcolor{red}{2.3}, all directed edges in the essential graph of $[\cG]$ are shared by $\cG'$. Therefore $\pa_{\cG'}(i)\subseteq \pa_{[\cG]}\cup\adj_{[\cG]}(i)$. 
    
    We now show that $\pa_{\cG'}(i)\cap \adj_{[\cG]}(i)$ is an undirected clique. For $j\neq k\in \pa_{\cG'}(i)\cap \adj_{[\cG]}(i)$, if $j\not\sim k$ in $[\cG]$, then $j\to i\from k$ construes a v-structure in $\cG'$ that is not in $[\cG]$; a contradiction to $\cG'\in[\cG]$. Thus $j\sim k$ in $[\cG]$. Furthermore $j\sim k$ must be undirected. Otherwise assume $j\to k$ without loss of generality. We have $j\sim i\sim k$ in one undirected chain component of $[\cG]$ and $j\sim i$, $i\sim k$ in different maximal cliques. As reviewed in Section~\textcolor{red}{2.3}, it follows from \citet{wienobst2021polynomial} that there is some DAG in $[\cG]$ such that the maximal clique containing $i\sim k$ is the most upstream and $k\to i$. This creates a cycle $k\to i\to j\to k$; a contradiction. Therefore $j\sim k$ must be undirected. Thus $\pa_{\cG'}(i)\cap \adj_{[\cG]}(i)$ is an undirected clique. 
\end{proof}

\section{AN EXAMPLE}

We give an example of a DAG whose maximum undirected clique size is $2$ but its MEC has size $\exp(\Omega(n))$. Consider the DAG in Figure~\ref{fig:a5}: it starts off with two v-structures and then continues with repeated block structures. In its essential graph, all black edges are directed (from v-structures and Meek rule 1), whereas all green edges are undirected and can be oriented in all possible ways. Therefore its maximum undirected clique size is $2$ but its MEC has size $2^{n/2-1}=\exp(\Omega(n))$.

\begin{figure}[h]
\centering
\includegraphics[width=0.4\textwidth]{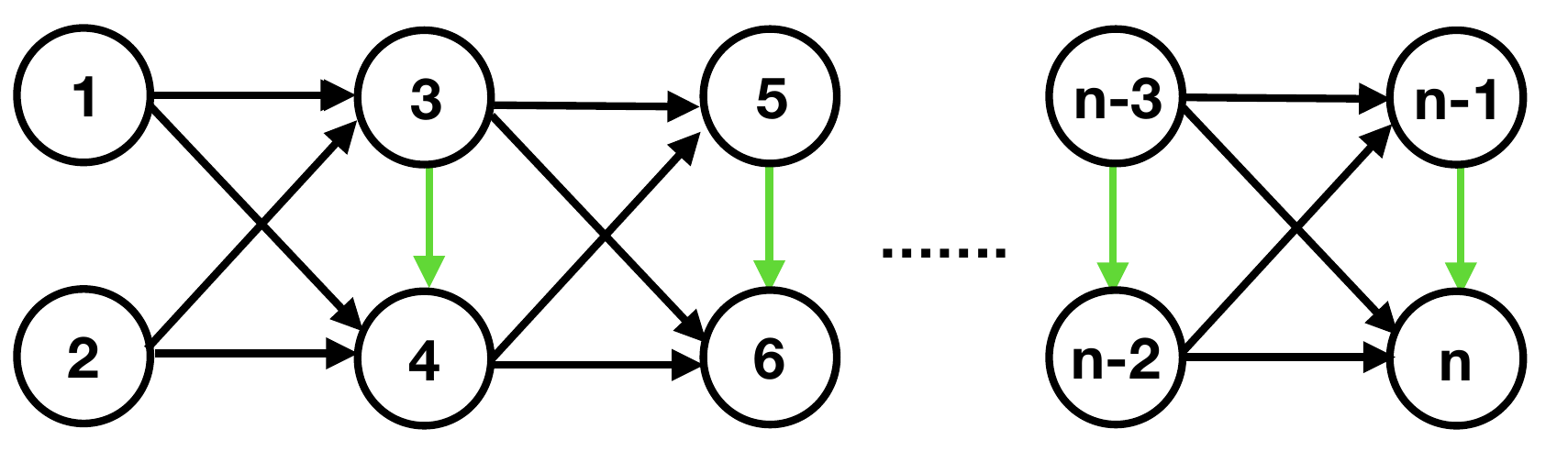}
\caption{An example DAG.}\label{fig:a5}
\vspace{-.1in}
\end{figure}


\end{document}